\documentclass[11pt]{article}

\usepackage[T1]{fontenc}
\usepackage[utf8]{inputenc}
\usepackage[a4paper,margin=1in]{geometry}
\usepackage{amsmath,amssymb,amsfonts,amsthm}
\usepackage{booktabs}
\usepackage{graphicx}
\usepackage{multirow}
\usepackage{multicol}
\usepackage{tablefootnote}
\usepackage{float}
\usepackage{url}
\usepackage{makecell}
\usepackage{xspace}
\usepackage{tikz}
\usepackage{subcaption}
\usepackage[numbers,sort&compress]{natbib}
\usepackage{hyperref}
\usepackage[capitalize,noabbrev]{cleveref}
\usepackage{microtype}
\usepackage{setspace}
\usepackage{authblk}

\graphicspath{{figures/}}

\newcommand{\revised}[1]{#1}
\newcommand{\modelname}{\textsc{Q-BioLat}\xspace}
\newtheorem{proposition}{Proposition}[section]
\newtheorem{example}{Example}[section]

\title{\modelname: Binary Latent Protein Fitness Landscapes for QUBO-Based Optimization}
\author[1]{Truong-Son Hy\thanks{Corresponding author: \href{mailto:thy@uab.edu}{thy@uab.edu}}}
\affil[1]{Department of Computer Science, University of Alabama at Birmingham, Birmingham, Alabama, USA}
\date{}

\begin{document}
\maketitle

\begin{abstract}
\noindent\textbf{Background:}
Protein fitness optimization is a discrete search problem, and the representation used for prediction also determines the neighborhood graph traversed by an optimizer. We introduce \modelname, a framework that maps pretrained protein-language-model embeddings to compact binary codes and fits a quadratic unconstrained binary optimization (QUBO) surrogate with unary and pairwise latent interactions. Our central contribution is an optimization-aware view of representation: binary encodings that are similar in pointwise predictive accuracy can induce different Hamming neighborhoods, local optima, and search trajectories. We formalize when a recoding is only a Hamming-isometric reparameterization and give a constructive example showing that exact pointwise agreement does not imply optimization equivalence.

\medskip
\noindent\textbf{Results:}
We study experimentally measured GFP and AAV fitness landscapes from ProteinGym. The internal QUBO surrogate is evaluated against labels withheld from QUBO fitting. A conservative retrieval analysis maps optimized codes to measured variants and reports their experimental fitness, while neural decoding of potentially unmeasured sequences is evaluated separately with an experiment-trained sequence surrogate and is interpreted only as model-based candidate prioritization. Across the reported comparisons, PCA followed by per-coordinate median thresholding yields a more balanced and decodable binary space than the post-hoc-zero-threshold AE/VAE baselines, despite the latter's low continuous reconstruction error. In the measured-library retrieval analysis, simulated annealing, genetic algorithms, and greedy hill climbing frequently return high-percentile variants; decoded candidates are reported separately using surrogate-predicted scores.

\medskip
\noindent\textbf{Conclusions:}
\modelname provides a compact framework for studying how representation, pairwise energy structure, and discrete search jointly affect protein-fitness optimization; prospective validation of newly decoded sequences remains future work.
\end{abstract}

\noindent\textbf{Keywords:} Protein fitness landscapes; protein language models; binary latent representations; QUBO optimization; combinatorial optimization; simulated annealing; quantum annealing; protein engineering; bioinformatics.

\section{Introduction} \label{sec:Introduction}

\begin{figure*}
    \centering
    \includegraphics[width=0.9\linewidth]{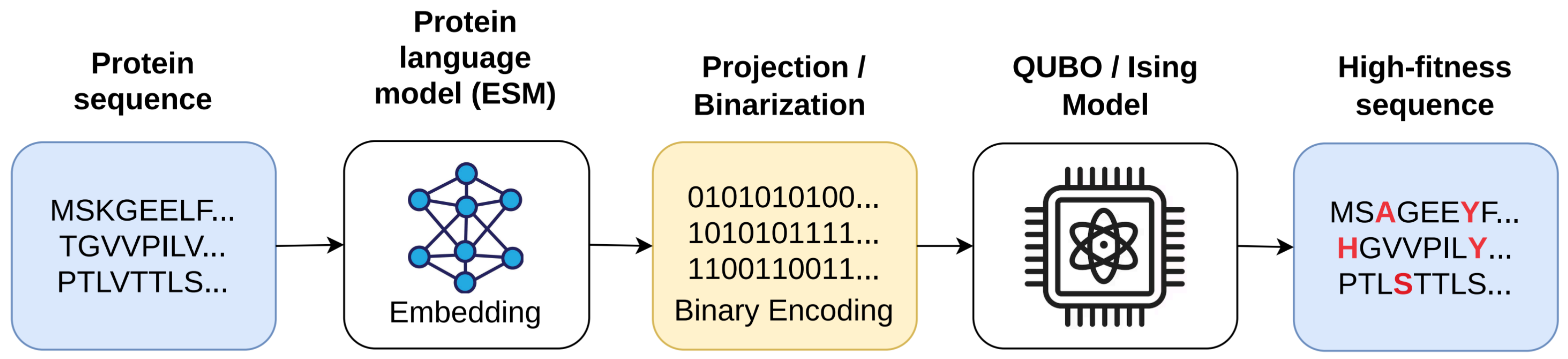}
    \caption{Overview of the \modelname framework. Protein sequences are first encoded using a pretrained protein language model (ESM) to obtain continuous embeddings. These embeddings are transformed into binary latent representations through projection and binarization, enabling protein fitness to be modeled as a quadratic unconstrained binary optimization (QUBO) problem. The resulting latent fitness landscape can be explored using combinatorial optimization methods and is directly compatible with quantum annealing hardware. Optimized latent codes are mapped back to candidate protein sequences.}
    \label{fig:Main_Figure}
\end{figure*}


Understanding and optimizing protein fitness landscapes is a central challenge in computational biology, with applications spanning enzyme engineering, drug discovery, and synthetic biology. Protein fitness landscapes are typically high-dimensional, rugged, and shaped by complex interactions between residues, making efficient exploration of sequence space difficult \cite{romero2009exploring, CHEN2023706, ProteinGym}. Recent advances in protein language models (PLMs), such as ESM-2 \cite{ESM-2} and ESM-3 \cite{ESM-3}, have enabled powerful representations of protein sequences by learning from large-scale unlabeled data. These representations have significantly improved performance on downstream tasks, including mutation effect prediction \cite{10.1093/bioinformatics/btae621, liu2025plm} and protein design \cite{watson2023novo, LatentDE, DiscreteDE}.

Despite these advances, most existing approaches rely on continuous neural predictors and gradient-based \cite{LatentDE} or sampling-based optimization strategies \cite{DiscreteDE}, which are not naturally suited for discrete combinatorial search. In particular, protein sequences are inherently discrete objects, and many biologically relevant optimization problems are fundamentally combinatorial in nature. Classical approaches such as evolutionary algorithms \cite{DiscreteDE, OZAWA2024101758} and Bayesian optimization \cite{atkinson2025protein, 10.1093/bib/bbac570} have been widely applied to protein engineering, but they often struggle with scalability or require large numbers of evaluations. More importantly, existing methods primarily emphasize predictive accuracy, while the role of the underlying representation in shaping the \emph{optimization landscape} remains underexplored.

In this work, we introduce \modelname, a framework that maps protein sequences into binary latent spaces where fitness landscapes can be explicitly modeled and optimized (see Figure \ref{fig:Main_Figure}). Starting from pretrained protein language model embeddings, we construct compact binary representations that enable the formulation of protein fitness as a quadratic unconstrained binary optimization (QUBO) \cite{kochenberger2014unconstrained, IsingFormulation, kim2025quantum} problem. This formulation transforms protein fitness modeling into a structured energy landscape over binary variables, allowing us to leverage a wide range of combinatorial optimization techniques, including simulated annealing and genetic algorithms, for efficient exploration of protein fitness landscapes.

Beyond its algorithmic formulation, \modelname provides a representation-centric perspective on protein fitness modeling through the lens of induced optimization landscapes. We show that representations with similar predictive performance can exhibit markedly different optimization behavior, as determined by the geometry of the latent space and the structure of the corresponding QUBO interactions. In particular, we observe a clear distinction between \emph{prediction quality} and \emph{optimization effectiveness}: learned autoencoder-based representations collapse after binarization, producing degenerate latent codes that fail to support combinatorial search, whereas simple structured representations such as PCA yield high-entropy and optimization-friendly latent spaces.

\revised{
The central distinction in this work is between pointwise prediction and optimization geometry. Given an encoding \(\phi:\mathcal S\rightarrow\{0,1\}^{m}\), a one-bit optimizer does not see only the predicted values \(\hat f(\phi(s))\); it also sees the representation-induced neighborhood graph
$ E_{\phi} = \left\{ (s,t):d_H\!\left(\phi(s),\phi(t)\right)=1 \right\}. $
Pointwise regression metrics evaluate predictions on observed variants but do not determine the edges of this graph, the values assigned to unobserved codes, or collisions created by a many-to-one encoding. Consequently, a binary representation is an innocuous reparameterization only when it preserves the relevant Hamming geometry. Coordinate permutations and bit complements do so; general projections, thresholding maps, and learned encoders need not. We formalize this distinction in Proposition \ref{prop:search_equivalance} and illustrate it in Figure \ref{fig:encoding_search_geometry} with two exact QUBO fits of the same four fitness values that possess different one-bit local maxima.
}

Empirically, we demonstrate that PCA-based binary latent representations consistently outperform both random projections and learned AE/VAE representations in decoding quality and end-to-end protein design performance across multiple datasets and data regimes. Furthermore, classical combinatorial optimization methods, including simulated annealing, genetic algorithms, and greedy hill climbing, are highly effective when applied to these structured latent landscapes, reliably identifying high-fitness regions even in data-scarce settings. We also observe that optimization performance depends on latent dimensionality, revealing a trade-off between expressivity, generalization, and searchability.

Overall, \modelname bridges protein language modeling, discrete latent representation learning, and combinatorial optimization by reframing protein fitness modeling as the construction of an optimization-aware energy landscape. By expressing protein fitness landscapes as QUBO problems, our framework provides a natural interface between modern machine learning models and discrete optimization methods, while also offering a pathway toward integration with quantum and quantum-inspired optimization paradigms.

\section{Related Work} \label{sec:related_works}

\textbf{Protein language models and sequence representations.}
Recent advances in protein language models (PLMs) have significantly improved the representation of biological sequences by leveraging large-scale unlabeled protein data. Models such as ESM-2 \cite{ESM-2} and its successor ESM-3 \cite{ESM-3} learn contextual embeddings that capture structural and functional properties of proteins directly from sequence data. More recently, multimodal protein language models have been developed to integrate sequence information with additional modalities such as structure, function, and evolutionary context, further enhancing representation quality and downstream performance \cite{10.1093/biomethods/bpae043, Khang_Ngo_2024}. These models have demonstrated strong performance across a wide range of tasks, including structure prediction, mutation effect prediction, and protein design \cite{doi:10.1073/pnas.2016239118, ESM-2}. In this work, we build upon PLM embeddings as a foundation, but transform them into binary latent representations to enable discrete optimization. \\

\noindent
\textbf{Protein fitness prediction and engineering.}
Predicting protein fitness landscapes is a fundamental problem in computational biology, with applications in enzyme optimization, therapeutic design, and synthetic biology. Recent approaches combine machine learning models with experimental data, particularly from deep mutational scanning (DMS) experiments, to predict the effects of mutations \cite{yang2019machine, biswas2021low}. Large-scale benchmarks such as ProteinGym have been introduced to systematically evaluate fitness prediction models across diverse proteins and mutation regimes \cite{ProteinGym}. While these methods achieve strong predictive performance, they typically rely on continuous representations and do not directly address the combinatorial nature of sequence optimization. Furthermore, these approaches primarily focus on prediction rather than explicitly enabling structured optimization over discrete sequence spaces. \\

\noindent
\textbf{Optimization methods for protein design.}
A variety of optimization strategies have been applied to protein engineering, including evolutionary algorithms \cite{LatentDE, DiscreteDE, OZAWA2024101758, 10.1093/bib/bbac570}, Bayesian optimization \cite{10.1093/bib/bbac570, atkinson2025protein}, and reinforcement learning \cite{wang2025reinforcement, sun2025accelerating}. Evolutionary methods such as genetic algorithms are widely used due to their ability to explore large discrete search spaces, while Bayesian optimization offers sample-efficient search under expensive evaluation settings. Classical optimization techniques such as simulated annealing have also been applied to combinatorial problems with rugged energy landscapes \cite{kirkpatrick1983optimization, 10.7551/mitpress/1090.001.0001}. However, many of these approaches operate directly in sequence space or continuous embedding spaces, rather than explicitly modeling protein fitness as a structured combinatorial optimization problem. In contrast, our approach explicitly models protein fitness as a structured combinatorial optimization problem in a learned binary latent space. \\

\noindent
\textbf{QUBO formulations and quantum annealing.}
Quadratic unconstrained binary optimization (QUBO) \cite{kochenberger2014unconstrained, IsingFormulation, kim2025quantum} provides a unified framework for expressing a wide range of combinatorial problems in terms of binary variables and pairwise interactions. QUBO formulations are closely related to the Ising model and form the basis of many optimization techniques, including quantum annealing \cite{IsingFormulation}. Quantum annealing hardware, such as that developed by D-Wave Systems, has demonstrated the ability to solve certain classes of optimization problems by exploiting quantum effects \cite{Johnson2011QuantumAW}. While applications of quantum optimization in computational biology remain limited, recent work has begun to explore its potential for tasks such as molecular design and protein folding. In this work, we formulate protein fitness landscapes as QUBO problems in a binary latent space, enabling direct compatibility with both classical combinatorial solvers and emerging quantum annealing hardware, and providing a unified perspective on protein design as discrete energy landscape optimization. \\

\noindent
\textbf{Latent space optimization and discrete representations.}
Optimizing in learned latent spaces has become a common strategy in machine learning, particularly for generative modeling and design tasks \cite{hottung2021learning, 10.1145/3520304.3533993}. Continuous latent spaces have been widely used for molecule and protein generation \cite{doi:10.1021/acsomega.2c03264, MGVAE, LatentDE}, but they often require gradient-based optimization or sampling methods that do not naturally handle discrete constraints. More recently, discrete and binary latent representations have been explored for enabling combinatorial search and efficient optimization \cite{jiang2023efficient, abe2026effectiveness}. Our work builds on this idea by constructing binary latent representations derived from protein language models and explicitly modeling the resulting fitness landscape as a QUBO problem, bridging representation learning and combinatorial optimization. \\

\noindent
\textbf{Energy landscapes and combinatorial optimization.}
Many combinatorial optimization problems can be interpreted through the lens of energy landscapes, where the objective function defines a surface over discrete configurations and optimization corresponds to navigating this landscape toward low-energy (or high-fitness) regions \cite{10.1093/acprof:oso/9780198570837.001.0001, Wales_2004}. In particular, QUBO formulations are closely related to Ising models and have been widely studied in statistical physics as models of interacting binary systems \cite{IsingFormulation}. The structure of the interaction matrix determines key properties of the landscape, such as smoothness, ruggedness, and the distribution of local optima, which in turn strongly influence optimization difficulty \cite{https://doi.org/10.1002/cpa.21422, pmlr-v38-choromanska15}. Such energy landscape perspectives have also been explored in machine learning, particularly in the analysis of non-convex optimization and neural network loss surfaces \cite{pmlr-v38-choromanska15, 43404}. While these ideas are well established in physics and optimization, their role in modern representation learning pipelines for biological sequence design remains underexplored. In this work, we adopt an energy landscape view of protein fitness modeling in binary latent spaces and study how representation-induced structure affects combinatorial search. \\

\noindent
\textbf{Representation learning for optimization.}
Most existing representation learning approaches in protein modeling and related domains focus on improving predictive performance on downstream tasks, such as regression or classification \cite{10.1093/bioinformatics/btae621, liu2025plm, doi:10.1073/pnas.2016239118}. However, in design and optimization settings, the learned representation implicitly defines a search space and induces an optimization landscape. Recent work in latent space optimization has explored how learned embeddings can facilitate search in applications such as molecule design, protein engineering, and combinatorial optimization \cite{doi:10.1021/acscentsci.7b00572, pmlr-v80-jin18a, C9SC04026A, abeer2024multi}. These approaches typically leverage continuous latent spaces learned by variational autoencoders or related generative models, enabling gradient-based or sampling-based search. 
In protein design, similar ideas have been explored through latent generative models and language-model-guided optimization \cite{LatentDE, DiscreteDE, watson2023novo}, where representations are used to guide exploration of sequence space. However, these approaches are primarily evaluated through predictive metrics, likelihood, or sample quality, rather than explicit analysis of the induced optimization landscape. More broadly, recent work in machine learning has begun to study how representations influence optimization and search behavior \cite{10.1109/TPAMI.2013.50, pmlr-v80-bojanowski18a}, but this perspective remains underdeveloped in the context of biological sequence design.
In contrast, our work emphasizes that representation quality should also be assessed based on the properties of the induced optimization landscape, and demonstrates that different representations with similar predictive accuracy can lead to substantially different optimization outcomes.
\section{Methods}

\subsection{Overview}

We propose \modelname, a framework for modeling and optimizing protein fitness landscapes in a binary latent space. Given a protein sequence, \modelname first computes a pretrained protein language model embedding, then transforms this continuous representation into a compact binary latent code. In this binary space, the fitness landscape is approximated by a quadratic unconstrained binary optimization (QUBO) surrogate, which models both unary and pairwise effects among latent variables. The learned surrogate is then optimized using combinatorial search methods such as simulated annealing and genetic algorithms. Because the final objective is expressed in QUBO form, the framework is naturally compatible with both classical combinatorial solvers and quantum annealing hardware.

Formally, the overall pipeline is
\[
s \;\rightarrow\; e(s) \in \mathbb{R}^{d}
\;\rightarrow\; z(s) \in \mathbb{R}^{m}
\;\rightarrow\; x(s) \in \{0,1\}^{m}
\;\rightarrow\; \hat{f}(x),
\]
where $s$ is a protein sequence, $e(s)$ is the protein language model embedding, $z(s)$ is a reduced continuous latent representation, $x(s)$ is the binarized latent code, and $\hat{f}(x)$ is the QUBO surrogate of protein fitness.

\subsection{Problem Setup}

Let $\mathcal{D}=\{(s_i,y_i)\}_{i=1}^{N}$ denote a protein fitness dataset, where $s_i$ is a protein sequence and $y_i \in \mathbb{R}$ is its experimentally measured fitness. Our goal is to learn a surrogate model that maps each sequence to a compact binary latent representation and predicts its fitness, while also enabling efficient combinatorial optimization over the latent space.

Unlike conventional approaches that optimize directly in sequence space or in continuous embedding space, we aim to transform protein fitness prediction into a discrete optimization problem. This is motivated by two observations: first, protein sequences are inherently discrete objects; second, binary latent spaces admit efficient combinatorial search and can be directly mapped to QUBO and Ising formulations.

\subsection{Protein Language Model Embeddings}

For each protein sequence $s_i$, we obtain a continuous embedding using a pretrained protein language model. In our experiments, we use ESM-based sequence embeddings \cite{ESM-2, ESM-3} due to their strong empirical performance and broad adoption in protein representation learning. Given a sequence $s_i$ of length $L_i$, the language model produces contextualized residue-level representations $H_i \in \mathbb{R}^{L_i \times d}$, where $d$ is the hidden dimension of the model. To obtain a fixed-length sequence representation, we apply mean pooling across residues:
\[
e_i = \frac{1}{L_i} \sum_{j=1}^{L_i} H_i^{(j)} \in \mathbb{R}^{d}.
\]
This yields one dense embedding vector per sequence.

The role of the protein language model in \modelname is not to directly predict fitness, but rather to provide a biologically informed continuous representation that can later be compressed into a discrete latent code suitable for combinatorial optimization.

\subsection{Continuous-to-Binary Latent Mapping}

A key component of \modelname is the transformation of continuous protein embeddings into compact binary latent representations. While simple projection and thresholding provide a lightweight baseline, more expressive representations can be obtained by learning the latent mapping in a data-driven manner. In this work, we consider a unified family of approaches for constructing binary latent codes, ranging from linear projections to learned autoencoding models.

\paragraph{Linear projection and binarization.}
Given a protein embedding $e \in \mathbb{R}^d$, we first consider linear dimensionality reduction methods such as random projection or principal component analysis (PCA):
\begin{equation}
z = W e, \quad W \in \mathbb{R}^{m \times d},
\end{equation}
where $m \ll d$ is the latent dimension. The resulting continuous latent vector $z \in \mathbb{R}^m$ is then binarized using a thresholding function:
\begin{equation}
x_k = \mathbb{I}(z_k > \tau_k),
\end{equation}
where $\tau_k$ is chosen as the median of the $k$-th projected component. In the reported projection-based experiments, these medians are computed once over each sampled embedding set before the downstream supervised splits; this representation-construction step uses the embeddings only and does not use fitness labels. This approach provides a simple and efficient baseline for constructing binary latent representations.

\paragraph{Deterministic binary autoencoder.}
To learn a more structured latent representation, we introduce a deterministic autoencoder \cite{doi:10.1126/science.1127647, 10.5555/1756006.1953039} that maps protein embeddings into a low-dimensional latent space and reconstructs them:
\begin{equation}
z = f_\theta(e), \quad \hat{e} = g_\phi(z),
\end{equation}
where $f_\theta$ and $g_\phi$ denote the encoder and decoder networks, respectively. To obtain binary latent codes, we apply a binarization function:
\begin{equation}
x = \mathrm{bin}(z),
\end{equation}
where $\mathrm{bin}(\cdot)$ may be implemented using thresholding or a sign function. The model is trained to minimize reconstruction loss:
\begin{equation}
\mathcal{L}_{\text{AE}} = \|e - \hat{e}\|_2^2.
\end{equation}
This formulation allows the latent space to adapt to the structure of protein embeddings, potentially leading to more meaningful and optimization-friendly representations.

\paragraph{Variational latent representations.}
We further consider a probabilistic formulation based on variational autoencoders (VAEs) \cite{kingma2013auto}, which model a distribution over latent variables:
\begin{equation}
z \sim q_\theta(z \mid e), \quad \hat{e} \sim p_\phi(e \mid z).
\end{equation}
The model is trained by maximizing the evidence lower bound (ELBO):
\begin{equation}
\mathcal{L}_{\text{VAE}} = \mathbb{E}_{q_\theta(z \mid e)}[\log p_\phi(e \mid z)] - \mathrm{KL}(q_\theta(z \mid e)\,\|\,p(z)).
\end{equation}
Binary latent codes can be obtained by applying a threshold or sampling from a Bernoulli distribution parameterized by the latent variables:
\begin{equation}
x_k \sim \mathrm{Bernoulli}(\sigma(z_k)).
\end{equation}
This probabilistic formulation enables modeling uncertainty in the latent representation and provides a principled way to explore the latent space.

\paragraph{Discussion.}
These approaches define different ways of constructing binary latent spaces, ranging from simple linear projections to learned deterministic and probabilistic mappings. Importantly, the choice of latent representation directly affects the structure of the induced QUBO model and, consequently, the geometry of the resulting optimization landscape. In Section~\ref{sec:experiments}, we empirically investigate how these different constructions influence both predictive performance and optimization behavior.

\revised{
\subsubsection{Representation-induced search geometry}

A binary encoding determines not only the coordinates used to represent a protein variant, but also the neighborhood structure explored by a combinatorial optimizer. We therefore distinguish between a mere change of coordinates and a transformation that genuinely changes the induced search landscape. If two encodings differ only by a symmetry of the Boolean hypercube, then their one-bit neighborhoods and local optima are unchanged, and the transformation is only a reparameterization. Proposition \ref{prop:search_equivalance} characterizes this search-preserving case.

\begin{proposition}[\label{prop:search_equivalance}Search equivalence under Hamming isometries]
Let
\[
q(x)
=
c+\sum_{i=1}^{m}h_i x_i
+\sum_{1\leq i<j\leq m}J_{ij}x_i x_j,
\qquad x\in\{0,1\}^{m}.
\]
Let \(T:\{0,1\}^{m}\rightarrow\{0,1\}^{m}\) be defined by
\[
T(x)_k=x_{\pi(k)}\oplus b_k,
\]
where \(\pi\) is a permutation and \(b\in\{0,1\}^{m}\).
Then:

\begin{enumerate}
    \item \(T\) preserves Hamming distance;
    \item \(q\circ T^{-1}\) is also a QUBO;
    \item \(x^\star\) is a one-bit local maximizer of \(q\)
    if and only if \(T(x^\star)\) is a one-bit local maximizer
    of \(q\circ T^{-1}\).
\end{enumerate}

Therefore, coordinate permutations and bit complements constitute \textbf{genuine search-preserving reparameterizations} of the Boolean landscape: they change the coordinates but not the Hamming-neighborhood graph or the set of one-bit local optima.
\end{proposition}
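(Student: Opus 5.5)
The plan is to first write down \(T^{-1}\) explicitly and then prove the three claims in order, each by direct computation. The inverse is given by \(T^{-1}(y)_j = y_{\pi^{-1}(j)} \oplus b_{\pi^{-1}(j)}\). To check this, set \(y=T(x)\); then \(y_{\pi^{-1}(j)} = x_j \oplus b_{\pi^{-1}(j)}\), and XOR-ing with \(b_{\pi^{-1}(j)}\) recovers \(x_j\). This shows in particular that \(T\) is a bijection of \(\{0,1\}^m\).

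\textbf{Claim 1 (isometry).} For \(x,x'\in\{0,1\}^m\) we have \(T(x)_k \neq T(x')_k\) if and only if \(x_{\pi(k)} \neq x'_{\pi(k)}\), because the common term \(b_k\) cancels under XOR. Therefore
\[
d_H(T(x),T(x'))=\#\{k : x_{\pi(k)}\neq x'_{\pi(k)}\}=d_H(x,x'),
\]
where the last equality holds because \(\pi\) is a bijection of the index set.

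\textbf{Claim 2 (QUBO closure).} Write \(x=T^{-1}(y)\), so that \(x_j = y_{\pi^{-1}(j)}\oplus b_{\pi^{-1}(j)}\). The key observation is that for a bit \(u\) and a constant \(\beta\in\{0,1\}\), the quantity \(u\oplus\beta\) equals either \(u\) or \(1-u\); in both cases it is an affine function of \(u\) over the reals. Substituting into \(q\):
\begin{itemize}
\item each unary term \(h_j x_j\) becomes affine in the single variable \(y_{\pi^{-1}(j)}\);
\item each pairwise term \(J_{ij}x_ix_j\) becomes a product of two affine functions of the distinct variables \(y_{\pi^{-1}(i)}\) and \(y_{\pi^{-1}(j)}\). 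These variables are distinct because \(\pi^{-1}\) is injective, so no \(y_k^2\) term appears. If it did, it would reduce to \(y_k\) anyway, since \(y_k^2=y_k\) on bits.
\end{itemize}
Expanding and collecting terms shows that \(q\circ T^{-1}\) has the form \(c'+\sum_k h'_k y_k+\sum_{k<l}J'_{kl}y_ky_l\). The expansion also gives the new coefficients explicitly, though only their existence is needed. This is the only step involving any bookkeeping, and I expect it to be the main (mild) obstacle, namely keeping the index relabelling and the sign flips straight. I would handle it by first treating a pure complement (\(\pi=\mathrm{id}\)) and then a pure permutation, and noting that \(T\) is the composition of the two. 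Since the class of QUBOs is closed under each operation separately, it is closed under their composition.

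\textbf{Claim 3 (local maxima).} Set \(\tilde q=q\circ T^{-1}\). By Claim 1 and bijectivity, \(T\) maps the Hamming-1 neighbourhood \(N(x^\star)=\{x: d_H(x,x^\star)=1\}\) bijectively onto \(N(T(x^\star))\): it sends Hamming-1 neighbours of \(x^\star\) to Hamming-1 neighbours of \(T(x^\star)\), and \(T^{-1}\), which is also an isometry, does the same in reverse. We also have \(\tilde q(T(x))=q(x)\) for every \(x\). Hence the condition \(q(x^\star)\ge q(x)\) for all \(x\in N(x^\star)\) translates term by term into \(\tilde q(T(x^\star))\ge \tilde q(y)\) for all \(y\in N(T(x^\star))\). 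The concluding remark of the proposition follows immediately: \(T\) is a graph automorphism of the hypercube that transports \(q\) to \(\tilde q\), so the Hamming-neighbourhood graph and the set of one-bit optima correspond exactly.
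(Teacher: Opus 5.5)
Your proof is correct and follows the same three-step route as the paper's: permutations and complements preserve Hamming distance, and substituting \(x_i=y_j\) or \(x_i=1-y_j\) keeps a degree-two pseudo-Boolean polynomial at degree at most two. The third step also matches, since \(T\) is an automorphism of the Hamming cube and so preserves every comparison between a vertex and its one-bit neighbours; you simply make explicit what the paper's sketch leaves implicit, namely the formula for \(T^{-1}\), the identity \((q\circ T^{-1})(T(x))=q(x)\), and the fact that injectivity of \(\pi^{-1}\) rules out squared terms.
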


\begin{proof}
Coordinate permutations and coordinate-wise complements preserve
whether two codes differ in exactly one bit and hence preserve
Hamming distance. Substituting \(x_i=y_j\) or \(x_i=1-y_j\)
into a degree-two pseudo-Boolean polynomial produces another
polynomial of degree at most two. Finally, \(T\) is a graph
isomorphism of the Hamming cube, so all objective comparisons
between a vertex and its one-bit neighbors are preserved.
\end{proof}

Proposition \ref{prop:search_equivalance} provides a precise baseline for what should be regarded as a mere reparameterization. If a binary representation differs from another only through coordinate permutations or bit complements, then a one-bit optimizer encounters an equivalent search problem. In contrast, dimensionality reduction followed by binarization is not generally restricted to such Hamming isometries: it can change which protein variants are adjacent, introduce collisions between variants, and alter the local-optimum structure of the learned landscape. Thus, comparable pointwise predictive accuracy does not by itself imply equivalent optimization behavior.

\begin{figure*}
\centering
\begin{tikzpicture}[
  vertex/.style={draw,rounded corners,minimum width=1.45cm,
                 minimum height=0.75cm,align=center,font=\small},
  localmax/.style={vertex,double,very thick},
  edge/.style={thick}
]
\node[vertex]   (a00) at (0,0)   {$00$\\$s_1:0$};
\node[localmax] (a01) at (0,2)   {$01$\\$s_2:3$};
\node[localmax] (a10) at (2.4,0) {$10$\\$s_3:2$};
\node[vertex]   (a11) at (2.4,2) {$11$\\$s_4:1$};
\draw[edge] (a00)--(a01)--(a11)--(a10)--(a00);
\node at (1.2,2.85) {Encoding $\phi$};
\node[align=center,font=\small] at (1.2,-0.95)
{$q_\phi(x)=2x_1+3x_2-4x_1x_2$};

\node[vertex]   (b00) at (5.5,0)   {$00$\\$s_1:0$};
\node[localmax] (b01) at (5.5,2)   {$01$\\$s_2:3$};
\node[vertex]   (b10) at (7.9,0)   {$10$\\$s_4:1$};
\node[vertex]   (b11) at (7.9,2)   {$11$\\$s_3:2$};
\draw[edge] (b00)--(b01)--(b11)--(b10)--(b00);
\node at (6.7,2.85) {Encoding $\psi$};
\node[align=center,font=\small] at (6.7,-0.95)
{$q_\psi(x)=x_1+3x_2-2x_1x_2$};
\end{tikzpicture}
\caption{\revised{Prediction-equivalent encodings need not be optimization-equivalent. The same four protein variants \(s_1,\ldots,s_4\), with fitness values \(0,3,2,1\), are assigned to the vertices of a two-dimensional Boolean hypercube under two different binary encodings, \(\phi\) (left) and \(\psi\) (right). Both fitness assignments are represented exactly by a QUBO, so their pointwise predictions on the four variants are identical. However, the change of encoding alters Hamming-1 adjacency: under \(\phi\), both \(s_2\) and \(s_3\) are one-bit local maxima, whereas under \(\psi\), only \(s_2\) is a one-bit local maximum. Double borders denote local maxima. The example demonstrates that exact pointwise agreement does not guarantee equivalent combinatorial search geometry unless the encodings are related by a Hamming isometry as characterized in Proposition \ref{prop:search_equivalance}.}}
\label{fig:encoding_search_geometry}
\end{figure*}

\begin{example}[Prediction-equivalent but search-inequivalent encodings] Consider four protein variants with fitness values

$$ f(s_1)=0,\qquad f(s_2)=3,\qquad f(s_3)=2,\qquad f(s_4)=1. $$

Under the encoding

$$ \phi(s_1)=00,\quad \phi(s_2)=01,\quad \phi(s_3)=10,\quad \phi(s_4)=11, $$

these values are represented exactly by

$$ q_\phi(x)=2x_1+3x_2-4x_1x_2. $$

Both \(01\) and \(10\) are one-bit local maxima. Now consider the alternative encoding

$$ \psi(s_1)=00,\quad \psi(s_2)=01,\quad \psi(s_3)=11,\quad \psi(s_4)=10, $$

for which the same four fitness values are represented exactly by

$$ q_\psi(x)=x_1+3x_2-2x_1x_2. $$

Here, only \(01\) is a one-bit local maximum. Thus, the two representations have identical pointwise fitness values but different Hamming-neighborhood structure and different local optima. The transformation from \(\phi\) to \(\psi\) is therefore not a Hamming-isometric reparameterization of the type characterized in Proposition \ref{prop:search_equivalance}. Figure \ref{fig:encoding_search_geometry} illustrates this construction.
\end{example}

}

\subsection{QUBO Surrogate for Protein Fitness}

We model protein fitness in binary latent space with a QUBO surrogate. Given a binary latent code $x \in \{0,1\}^{m}$, the predicted fitness is:
\[
\hat{f}(x) = \sum_{k=1}^{m} h_k x_k + \sum_{1 \leq k < \ell \leq m} J_{k\ell} x_k x_\ell,
\]
where $h_k \in \mathbb{R}$ captures the unary contribution of latent bit $k$, and $J_{k\ell} \in \mathbb{R}$ captures the pairwise interaction between bits $k$ and $\ell$. Equivalently, the model can be written in matrix form as:
\[
\hat{f}(x) = h^\top x + \frac{1}{2}x^\top J x,
\]
where $J \in \mathbb{R}^{m \times m}$ is the symmetric Hamiltonian (i.e. pairwise interaction matrix) with zero diagonal, and $h \in \mathbb{R}^m$ corresponds to the bias term. This representation connects the latent fitness model to the classical QUBO formulation.

\paragraph{Feature construction.}
For each binary code $x_i$, we construct a feature vector consisting of:
\begin{itemize}
    \item all linear terms $\{x_{ik}\}_{k=1}^{m}$,
    \item all pairwise interaction terms $\{x_{ik}x_{i\ell}\}_{1 \leq k < \ell \leq m}$.
\end{itemize}
The total number of features is therefore:
\[
m + \frac{m(m-1)}{2}.
\]

\paragraph{Parameter estimation.}
We fit the QUBO surrogate using ridge regression. Let $\Phi(X)$ denote the design matrix formed by the linear and pairwise features of the training binary codes, and let $y$ denote the corresponding fitness vector. We solve:
\[
w^\star = \arg\min_{w} \|\Phi(X)w - y\|_2^2 + \lambda \|w\|_2^2,
\]
where $\lambda > 0$ is an $\ell_2$ regularization coefficient. The learned parameter vector $w^\star$ is then unpacked into the unary coefficients $h$ and pairwise coefficients $J$.

This surrogate is attractive for three reasons. First, it is interpretable, since each term corresponds to a unary or pairwise latent effect. Second, it is computationally efficient to fit and evaluate for moderate latent dimensions. Third, it directly yields a QUBO objective suitable for classical and quantum combinatorial optimization.

\revised{
\subsubsection{Statistical-physics interpretation}
The QUBO formulation also admits a direct interpretation as an Ising
energy model. Let
\[
\sigma_i = 2x_i-1 \in \{-1,+1\},
\qquad
x_i=\frac{\sigma_i+1}{2}.
\]
Then the QUBO fitness
\[
\hat f(x)
=
\sum_i h_i x_i
+
\sum_{i<j}J_{ij}x_i x_j
\]
can be rewritten as
\[
\hat f(\sigma)
=
C+\sum_i H_i\sigma_i+
\sum_{i<j}K_{ij}\sigma_i\sigma_j,
\]
where
\[
K_{ij}=\frac{J_{ij}}{4},
\qquad
H_i=
\frac{h_i}{2}
+
\frac{1}{4}\sum_{j\neq i}J_{ij},
\]
and $C$ is independent of $\sigma$. Equivalently,
$E(\sigma)=-\hat f(\sigma)$ is an Ising energy.

Under this interpretation, each binary latent coordinate acts as a
coarse-grained spin, $H_i$ is its effective external field, and
$K_{ij}$ describes the coupling between two latent modes. Positive
couplings favor aligned latent states, whereas negative couplings favor
opposing states. Competing couplings can generate frustration and
multiple local optima, providing a statistical-physics interpretation
of ruggedness in the learned fitness landscape.

Importantly, these latent spins and couplings should not be interpreted
as individual amino-acid states or direct physical residue--residue
interactions. Rather, they describe collective modes of variation in
the protein-language-model representation. Thus, the Ising analogy
provides a coarse-grained description of context-dependent interactions
in the learned protein fitness landscape.

\paragraph{Illustrative example.}
Consider three latent spins and suppose the effective field acting on
the first coordinate is
\[
\mathcal{H}_1(\sigma)
=
H_1+K_{12}\sigma_2+K_{13}\sigma_3,
\]
with
\[
H_1=0.2,\qquad K_{12}=0.8,\qquad K_{13}=-0.7.
\]
If $(\sigma_2,\sigma_3)=(+1,-1)$, then
\[
\mathcal{H}_1=0.2+0.8+0.7=1.7,
\]
so the landscape favors $\sigma_1=+1$. In contrast, if
$(\sigma_2,\sigma_3)=(-1,+1)$, then
\[
\mathcal{H}_1=0.2-0.8-0.7=-1.3,
\]
so the preferred state becomes $\sigma_1=-1$.
Thus, the effect of changing one latent coordinate depends on the
configuration of the others. This context dependence is the latent-space
analogue of epistatic interaction and is precisely the type of structure
captured by the pairwise QUBO terms.
}

\subsection{Latent Space Optimization}

Once the QUBO surrogate has been fitted, we seek binary latent codes that maximize the predicted fitness:
\[
x^\star = \arg\max_{x \in \{0,1\}^{m}} \hat{f}(x).
\]
Because this is a discrete combinatorial problem, we employ search strategies that operate directly in the binary latent space.

\paragraph{Simulated annealing \cite{kirkpatrick1983optimization}.}
Simulated annealing starts from an initial latent code and iteratively proposes single-bit flips. Moves that improve the objective are accepted, while worse moves may be accepted with a temperature-dependent probability, enabling escape from local optima. The temperature is gradually reduced during the search.

\paragraph{Genetic algorithm \cite{10.7551/mitpress/1090.001.0001}.}
The genetic algorithm maintains a population of binary latent codes and evolves them through selection, crossover, and mutation. This allows broader exploration of the latent space and is particularly useful in higher-dimensional settings where multiple promising regions may exist.

\paragraph{Baselines.}
To contextualize performance, we also compare against greedy hill climbing, random search, and a lightweight latent Bayesian-style search baseline. Greedy hill climbing iteratively flips the single bit that most improves the objective until convergence. Random search samples binary codes uniformly and retains the best candidate. The latent Bayesian-style method uses a kernel-based uncertainty heuristic over binary latent codes.

\subsection{Decoding of Optimized Latent Representations}

A key step in \modelname is mapping optimized binary latent codes back to valid protein sequences. Since optimization is performed in latent space, decoding provides the connection between combinatorial search and biologically meaningful sequences.

Retrieval-based decoding. As a conservative baseline, we first use nearest-neighbor retrieval in Hamming space. Given an optimized latent code, we identify the closest latent codes from the training set and return their associated sequences. This ensures that decoded sequences remain within the support of the observed data distribution.

Neural decoding. To enable sequence generation beyond the observed dataset, we introduce a latent-conditioned mutation decoder. Instead of generating sequences autoregressively, the decoder predicts mutation patterns relative to a wild-type sequence. For each position, the model predicts (i) whether a mutation occurs and (ii) the identity of the mutated amino acid. This formulation leverages the structure of deep mutational scanning datasets, where sequences differ from a common backbone.

Discussion. We find that decoding performance strongly depends on the structure of the latent space. In particular, PCA-based binary representations produce significantly more decodable latent codes than both random projections and AE/VAE representations. These results emphasize that representation learning must be aligned with both optimization and decoding objectives.

\subsection{Quantum Annealing Compatibility}

An important property of \modelname is that its optimization objective is already expressed in QUBO form. This means the learned latent fitness landscape can be optimized not only with classical combinatorial solvers, but also with quantum annealing hardware or other Ising/QUBO solvers without changing the model formulation. In this sense, \modelname provides a quantum-compatible interface between protein representation learning and discrete optimization.

In the present work, we focus on classical solvers to establish the feasibility of the framework and to analyze the optimization behavior in binary latent space. Nevertheless, the QUBO formulation opens a clear pathway toward future quantum-assisted protein engineering, where learned latent fitness landscapes could be deployed on quantum annealers or related specialized optimization hardware.

\subsection{Computational Complexity}

Let $m$ denote the binary latent dimension. The QUBO surrogate uses $
m + \frac{m(m-1)}{2} = \mathcal{O}(m^2)$ features. Therefore, increasing latent dimensionality increases expressive power but also increases both the number of surrogate parameters and the difficulty of combinatorial search. This trade-off motivates our experimental study over multiple latent dimensions.

By separating the dense protein language model embedding stage from the binary optimization stage, \modelname keeps the most expensive neural computation fixed while enabling fast repeated optimization in the compact binary latent space. This makes the framework computationally attractive for studying representation--optimization trade-offs.
\section{Results and Discussion} \label{sec:experiments}


\revised{
Our experimental study evaluates three aspects of \modelname: (i) the predictive quality of the internal QUBO surrogate against experimentally measured fitness; (ii) the effect of binary representation on decoding and combinatorial optimization; and (iii) model-based evaluation of decoded candidates for which experimental measurements are unavailable. Accordingly, Sections \ref{subsec:qubo_surrogate} and \ref{subsec:retrieval_benchmark} provide experimentally grounded evaluations, whereas Section \ref{subsec:design_benchmark} provides secondary model-based candidate prioritization.
}

\subsection{Datasets and Data-Scarce Regimes}
\label{subsec:datasets}

\begin{table*}
\centering
\caption{ProteinGym datasets used in the experiments. 
We consider two representative protein fitness landscapes: GFP for fluorescence-based fitness and AAV capsid for viral fitness. 
For each dataset, we construct multiple subset sizes to study data-scarce and moderate-data regimes.}
\label{tab:datasets}
\begin{tabular}{lcccc}
\toprule
Dataset & Task & WT length & Full library size & Subset sizes \\
\midrule
GFP & Fluorescence / fitness & 238 & 51{,}714 & 1000, 2000, 5000, 10000 \\
AAV & Capsid / fitness & 735 & 42{,}328 & 1000, 2000, 5000, 10000 \\
\bottomrule
\end{tabular}
\end{table*}

We evaluate \modelname on protein fitness landscapes from the ProteinGym benchmark \cite{ProteinGym}. Our primary experiments focus on two representative tasks: GFP and AAV. These two datasets provide complementary settings for studying both mutation-effect prediction and optimization over rugged protein fitness landscapes. For each dataset, we construct subset sizes of \(\{1000, 2000, 5000, 10000\}\) variants (see Table \ref{tab:datasets}). We treat the \(1000\) and \(2000\) settings as \emph{data-scarce regimes}, and the \(5000\) and \(10000\) settings as \emph{moderate-data regimes}. Unless otherwise specified, each subset is split into training, validation, and test sets using a stratified split based on fitness quantiles, with the validation split used for model selection and the test split reserved for final reporting.
Stochastic optimization results are aggregated over five independent runs using seeds 0, 1, 2, 3, and 4, whereas model-fitting components use the fixed seeds specified in Appendix~\ref{sec:additional_experiments} unless otherwise noted.

\revised{
Data splitting and random-seed protocols are component-specific and are reported in Appendix \ref{sec:additional_experiments}. The sequence-level evaluator uses a fixed 70\%/10\%/20\% train/validation/test split, while the internal QUBO uses its separately specified 80\%/20\% split. Stochastic optimization results are aggregated over multiple runs, whereas model-fitting components use the fixed seeds stated in Appendix \ref{sec:additional_experiments}.
}

\subsection{External Sequence-Level Fitness Oracle}
\label{subsec:oracle}

\revised{
For the secondary evaluation of decoded sequences that lack experimental measurements, we use a separate sequence-level fitness predictor that maps a protein sequence to a scalar fitness estimate.
}
This oracle is conceptually distinct from the internal \modelname QUBO surrogate. The external oracle serves two purposes: 
(i) it provides a standardized black-box evaluator for comparing protein design methods that output sequences, and (ii) it enables evaluation of generated sequences that do not exactly match an experimentally observed variant.

For each protein sequence, we compute an ESM-based sequence embedding by mean pooling residue-level representations. On top of these embeddings, we train a family of classical regression models, including ridge regression, gradient-boosted decision trees (XGBoost), and Gaussian process regression. All models are trained on the same embedding space to isolate the effect of the regression model rather than the upstream representation. Model selection is performed using a validation split, and the best-performing oracle for each dataset-size regime is used in the downstream sequence-level optimization experiments.

We evaluate the external oracle using standard regression metrics, including Spearman correlation, Pearson correlation, root mean squared error (RMSE), and mean absolute error (MAE). Table~\ref{tab:oracle} summarizes the results across both GFP and AAV datasets, and Figure~\ref{fig:oracle_gfp_aav} visualizes the scaling behavior.

Across both datasets, we observe a consistent and systematic dependence of oracle performance on the amount of available data. In both low-data regimes (1000--2000 samples) and moderate-data regimes (5000--10000 samples), Gaussian process regression achieves the strongest performance, reflecting its ability to model uncertainty and adapt to limited supervision. As the dataset size increases, ridge regression becomes increasingly competitive and often achieves near the best performance, suggesting that the fitness signal is largely captured by a linear model in the ESM embedding space when sufficient data is available. In contrast, XGBoost underperforms in the smallest-data regime, indicating overfitting, but improves steadily with increasing dataset size. Based on these results, we use Gaussian process oracles in data-scarce settings when computationally feasible, and rely on ridge regression and XGBoost as scalable sequence-level evaluators in larger regimes.

\begin{table*}[t]
\centering
\caption{Performance of external sequence-level fitness oracles built on top of ESM embeddings across GFP and AAV datasets. 
Results are reported on the test set. The best Spearman correlation for each dataset and training size is highlighted in bold. \revised{Gaussian-process regression achieves the highest test Spearman correlation in seven of the eight dataset-size settings, while Ridge Regression performs best for AAV with 1,000 samples. Thus, Gaussian-process regression is generally strongest, but no evaluator uniformly dominates every regime.}}
\label{tab:oracle}
\begin{tabular}{llccccc}
\toprule
Dataset & Train size & Oracle model & Spearman $\uparrow$ & Pearson $\uparrow$ & RMSE $\downarrow$ & MAE $\downarrow$ \\
\midrule
\multirow{12}{*}{GFP} & \multirow{3}{*}{1000}  & Gaussian Process & \textbf{0.657} & \textbf{0.682} & \textbf{0.759} & \textbf{0.612} \\
 &  & Ridge Regression & 0.637 & 0.665 & 0.859 & 0.652 \\
 &  & XGBoost          & 0.520 & 0.557 & 0.863 & 0.711 \\
\cmidrule{2-7}
 & \multirow{3}{*}{2000}  & Gaussian Process & \textbf{0.714} & \textbf{0.729} & \textbf{0.742} & \textbf{0.586} \\
 &  & Ridge Regression & 0.660 & 0.669 & 0.817 & 0.637 \\
 &  & XGBoost          & 0.650 & 0.675 & 0.802 & 0.624 \\
\cmidrule{2-7}
 & \multirow{3}{*}{5000}  & Gaussian Process & \textbf{0.729} & \textbf{0.754} & \textbf{0.699} & \textbf{0.542} \\
 &  & Ridge Regression & 0.702 & 0.697 & 0.772 & 0.607 \\
 &  & XGBoost          & 0.635 & 0.685 & 0.772 & 0.596 \\
\cmidrule{2-7}
 & \multirow{3}{*}{10000} & Gaussian Process & \textbf{0.783} & \textbf{0.796} & \textbf{0.646} & \textbf{0.495} \\
 &  & Ridge Regression & 0.735 & 0.730 & 0.730 & 0.580 \\
 &  & XGBoost          & 0.693 & 0.729 & 0.731 & 0.568 \\
\midrule
 \multirow{12}{*}{AAV} & \multirow{3}{*}{1000}  & Gaussian Process & 0.671 & 0.650 & \textbf{2.390} & \textbf{1.906} \\
 &  & Ridge Regression & \textbf{0.706} & \textbf{0.656} & 2.618 & 2.058 \\
 &  & XGBoost          & 0.611 & 0.607 & 2.501 & 2.005 \\
\cmidrule{2-7}
 & \multirow{3}{*}{2000}  & Gaussian Process & \textbf{0.792} & \textbf{0.762} & \textbf{2.048} & \textbf{1.629} \\
 &  & Ridge Regression & 0.779 & 0.751 & 2.155 & 1.690 \\
 &  & XGBoost          & 0.694 & 0.694 & 2.255 & 1.813 \\
\cmidrule{2-7}
 & \multirow{3}{*}{5000}  & Gaussian Process & \textbf{0.806} & \textbf{0.782} & \textbf{1.898} & \textbf{1.505} \\
 &  & Ridge Regression & 0.796 & 0.763 & 1.962 & 1.547 \\
 &  & XGBoost          & 0.729 & 0.724 & 2.094 & 1.659 \\
\cmidrule{2-7}
 & \multirow{3}{*}{10000} & Gaussian Process & \textbf{0.838} & \textbf{0.821} & \textbf{1.712} & \textbf{1.346} \\
 &  & Ridge Regression & 0.808 & 0.781 & 1.874 & 1.489 \\
 &  & XGBoost          & 0.777 & 0.772 & 1.914 & 1.499 \\
\bottomrule
\end{tabular}
\end{table*}

\begin{figure*}[t]
    \centering
    \includegraphics[width=0.9\linewidth]{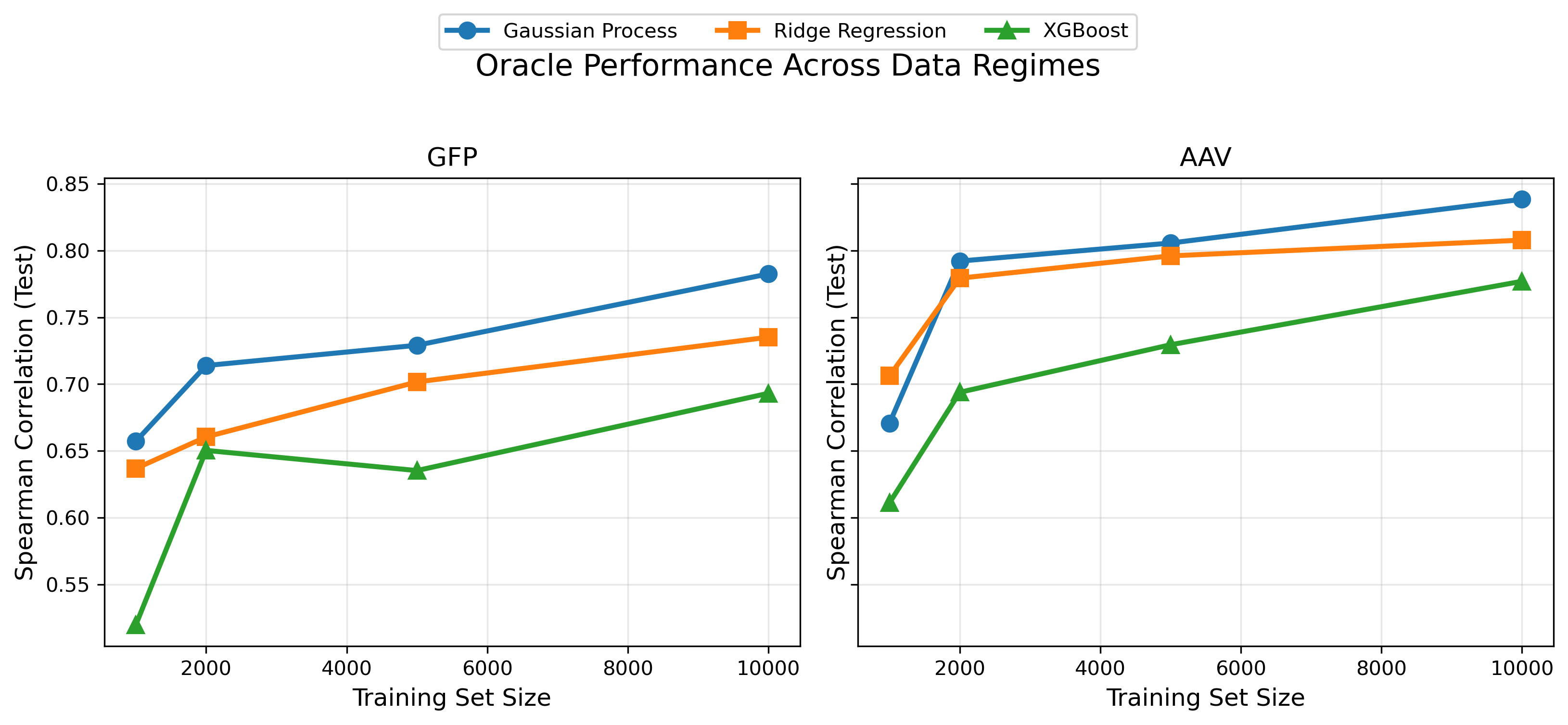}
    \caption{Test-set performance of external sequence-level fitness oracles across data regimes on GFP and AAV. Each panel shows the Spearman correlation between predicted and ground-truth fitness as a function of the number of training samples. Across both datasets, Gaussian process regression often achieves the strongest performance in both low-data and moderate-data regimes, reflecting its ability to model uncertainty under limited supervision. As the dataset size increases, ridge regression becomes increasingly competitive and often achieves near the best performance, indicating that the fitness signal is largely captured by a linear model in the ESM embedding space. In contrast, XGBoost underperforms in the smallest-data regime, suggesting overfitting, but improves steadily with additional data. 
    \revised{These results establish the predictive performance of the sequence-level evaluator on held-out measured variants and motivate its use for secondary model-based scoring of decoded sequences, while not constituting experimental validation of newly generated candidates.}
    }
    \label{fig:oracle_gfp_aav}
\end{figure*}

\subsection{Latent Representation Learning and Decoder Analysis}
\label{subsec:repr_decoder}

We compare four families of latent representations: (i) random projection with thresholding, (ii) PCA with thresholding, (iii) deterministic autoencoder (AE), and (iv) variational autoencoder (VAE). All methods operate on the same ESM embeddings to isolate the effect of latent representation.

\paragraph{Latent quality and collapse of learned representations.}
We first examine whether learned latent models produce usable binary codes after binarization. Table~\ref{tab:latent_collapse} reports reconstruction error, bit entropy, and the number of active latent dimensions. Although AE and VAE achieve very low reconstruction error, their binarized latent codes exhibit near-zero entropy and essentially no active dimensions. This indicates a collapse of the binary latent space, where most bits become constant across samples and fail to encode meaningful variation. As a result, these representations do not provide a useful combinatorial search space for downstream optimization.

\paragraph{Decoding performance of projection-based representations.}
We next evaluate the decodability of binary latent codes using a mutation-conditioned decoder. Table~\ref{tab:decoder_quality} shows that projection-based representations, particularly PCA, significantly outperform random projections in both mutation F1 and mutated-residue accuracy. For example, on AAV with 10000 training samples, PCA with 64 latent dimensions achieves substantially higher decoding accuracy than random projection at the same dimensionality. Similar trends hold in the data-scarce regime on GFP.

\paragraph{Scaling behavior with data and latent dimension.}
Figure~\ref{fig:decoder_scaling} further illustrates how decoding performance scales with the number of training samples at a fixed latent dimension of 64 bits. Across both GFP and AAV datasets, PCA consistently achieves higher mutation F1 than random projection, and performance improves as more data becomes available. This indicates that PCA not only preserves meaningful structure but also benefits from additional supervision.

\paragraph{Discussion.}
Taken together, these results reveal a clear mismatch between reconstruction quality and optimization readiness. While AE and VAE reconstruct embeddings accurately, their binarized latent spaces collapse and fail to support decoding or combinatorial search. In contrast, simple structured representations such as PCA maintain both variability and decodability, leading to substantially better downstream performance. These findings motivate our focus on PCA-based binary latent representations in the end-to-end design benchmark. These observations also suggest that latent representations for protein design should be evaluated not only by reconstruction or predictive accuracy, but by their ability to support both decoding and combinatorial optimization.

\revised{
The advantage of PCA should not be interpreted as a general superiority of linear over nonlinear representation learning. Under our protocol, PCA combines a data-adaptive, variance-ordered orthogonal basis with per-coordinate median thresholding. In the absence of ties, median thresholding makes each marginal bit approximately balanced and therefore maximizes its empirical Bernoulli entropy. Random projection shares this calibration step but not PCA’s data-adaptive variance ordering. By contrast, the AE/VAE baselines optimize continuous reconstruction or ELBO objectives and are binarized post hoc at zero. These objectives do not directly optimize bit balance, Hamming geometry, or post-threshold decoding. The observed collapse therefore demonstrates an objective-and-calibration mismatch between continuous representation learning and binary search, rather than an intrinsic limitation of neural encoders. Quantization-aware or entropy-regularized nonlinear encoders are a natural direction for future work.
}

\begin{table*}[t]
\centering
\caption{Collapse of learned latent representations after binarization on GFP with 1000 training samples. Although AE and VAE achieve low reconstruction error, their binarized latent codes exhibit \revised{near-zero entropy and essentially no active dimensions}.}
\label{tab:latent_collapse}
\begin{tabular}{l c c c c}
\toprule
Representation & Latent dim & Bit entropy $\uparrow$ & Active dims $\uparrow$ & Recon. MSE $\downarrow$ \\
\midrule
\multirow{2}{*}{AE}& 8 & 0.000 & 0.00 & 0.000125 \\
 & 64 & 0.000 & 0.00 & 0.000124 \\
\midrule
\multirow{2}{*}{VAE} & 8 & 0.149 & 0.25 & 0.000126 \\
 & 64 & 0.002 & 0.00 & 0.000164 \\
\bottomrule
\end{tabular}
\end{table*}

\begin{table*}[t]
\centering
\caption{Decoding quality of projection-based binary latent representations. PCA consistently achieves higher mutation F1 and mutated-residue accuracy than random projection, and both metrics improve with increasing latent dimensionality. The better performance between two latent dimensions (i.e. bits) of 16 and 64 is highlighted in bold across different settings. The higher latent dimension (i.e. 64) consistently outperforms the lower latent dimension (i.e. 16) in decoding quality.}
\label{tab:decoder_quality}
\begin{tabular}{l c l c c c}
\toprule
Dataset & Train size & Representation & Latent dim & Mutation F1 $\uparrow$ & AA accuracy $\uparrow$ \\
\midrule
\multirow{4}{*}{GFP} & \multirow{4}{*}{1000} & PCA & 16 & 0.110 & 0.551 \\
 &  & PCA & 64 & \textbf{0.273} & \textbf{0.595} \\
\cmidrule{3-6}
 &  & Random & 16 & 0.102 & 0.510 \\
 &  & Random & 64 & \textbf{0.168} & \textbf{0.553} \\
 \midrule
\multirow{4}{*}{AAV} & \multirow{4}{*}{10000} & PCA & 16 & 0.369 & 0.367 \\
 &  & PCA & 64 & \textbf{0.603} & \textbf{0.602} \\
\cmidrule{3-6}
 &  & Random & 16 & 0.338 & 0.316 \\
 &  & Random & 64 & \textbf{0.469} & \textbf{0.475} \\
\bottomrule
\end{tabular}
\end{table*}

\begin{figure*}[t]
\centering
\includegraphics[width=\linewidth]{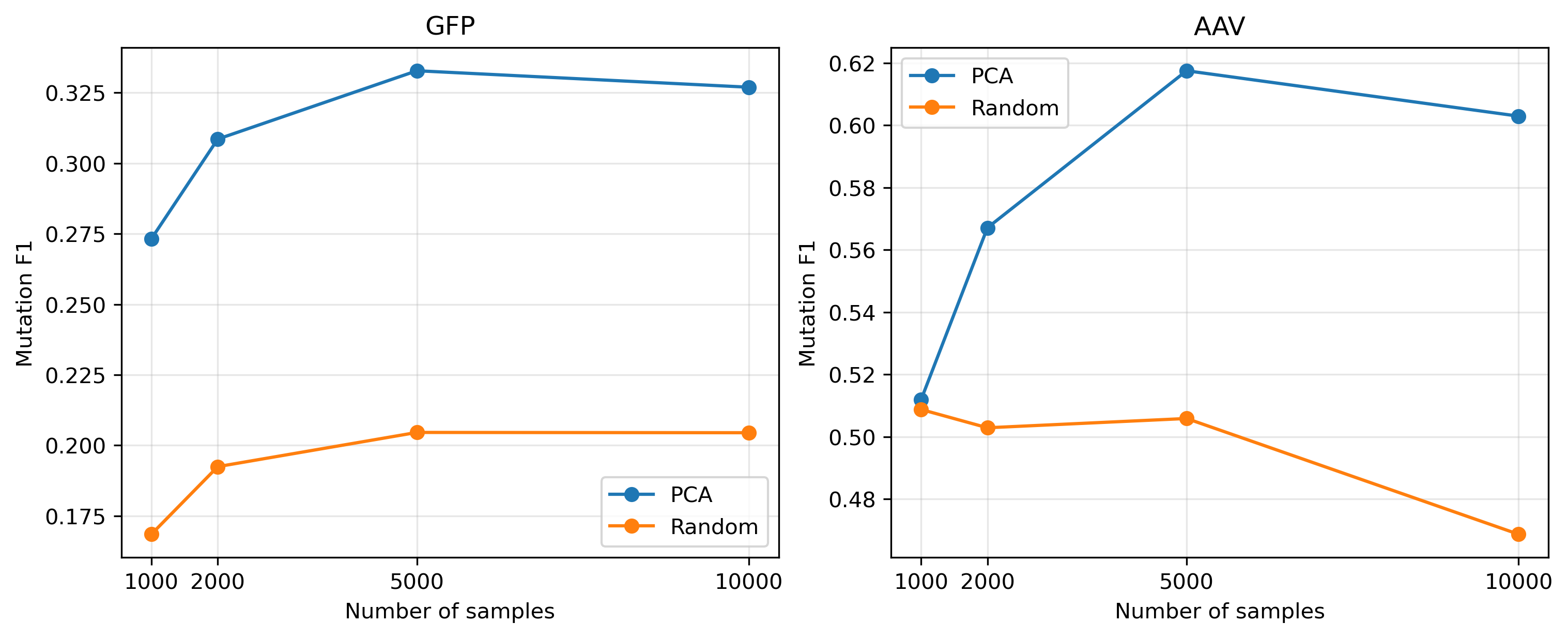}
\caption{Scaling of decoding performance with dataset size for PCA-based and random-projection binary latent representations at 64 bits. 
The x-axis shows the number of training samples and the y-axis shows mutation F1. 
Across both GFP and AAV datasets, PCA consistently achieves higher decoding accuracy than random projection, 
and performance improves with increasing data.}
\label{fig:decoder_scaling}
\end{figure*}

\subsection{Internal QUBO Surrogate Performance}
\label{subsec:qubo_surrogate}

Given a binary latent representation, \modelname fits a quadratic surrogate over unary and pairwise latent features. This surrogate defines the QUBO objective used for combinatorial optimization. In this section, we evaluate how well different latent representations support the construction of such a surrogate.

Table~\ref{tab:qubo_surrogate} reports predictive performance of the learned QUBO models using Spearman correlation on held-out test data. Importantly, this evaluation is not intended to establish state-of-the-art fitness prediction, but rather to assess whether the latent representation induces a meaningful optimization landscape.


\revised{
Across the projection-based representations shown in Table \ref{tab:qubo_surrogate}, PCA and random projection often achieve comparable predictive performance under the QUBO surrogate. In several settings, the differences in Spearman correlation are modest, and neither representation uniformly dominates across all data regimes and latent dimensions.
}


\revised{
However, when considered together with the representation and decoder analysis in Section \ref{subsec:repr_decoder} and the end-to-end benchmark in Section \ref{subsec:design_benchmark}, a clear discrepancy emerges between predictive accuracy and downstream optimization effectiveness. Random projection can yield competitive QUBO predictive performance while exhibiting weaker decoding and design performance than PCA.
}

These findings highlight a key distinction: predictive performance alone does not determine the usefulness of a representation for combinatorial optimization. Instead, the geometry of the latent space and its compatibility with decoding and search play a central role. This observation is consistent with the theoretical analysis in Section~\ref{sec:theory}, which shows that different representations can induce substantially different optimization landscapes even when predictive accuracy is similar.

Overall, these results demonstrate that while the QUBO surrogate can be fitted across a range of representations, only structured latent spaces such as PCA lead to effective end-to-end protein design.

\begin{table*}[t]
\centering
\caption{Predictive performance of the internal QUBO surrogate across latent representations, training data sizes, and latent dimensions. We report held-out test Spearman correlation. Although predictive differences are often modest, downstream optimization behavior differs substantially across representations. The better performance between the two binary-latent representations (i.e. PCA and random projection) is highlighted in bold. 
\revised{
Neither representation uniformly dominates in predictive performance, and the differences are often modest. This contrasts with their substantially different downstream decoding and optimization behavior.
}
}
\label{tab:qubo_surrogate}
\begin{tabular}{l c l c c}
\toprule
Dataset & Train size & Latent dim & Representation & Spearman $\uparrow$ \\
\midrule
\multirow{8}{*}{GFP} & \multirow{4}{*}{1000} & \multirow{2}{*}{16} & Random & \textbf{0.291} \\
 &  &  & PCA & 0.147 \\
\cmidrule{3-5}
 &  & \multirow{2}{*}{64} & Random  & 0.196 \\
 &  &  & PCA & \textbf{0.277} \\
\cmidrule{2-5}
 & \multirow{4}{*}{10000} & \multirow{2}{*}{16} & Random & 0.302 \\
 &  &  & PCA & \textbf{0.303} \\
\cmidrule{3-5}
 &  & \multirow{2}{*}{64} & Random & 0.413 \\
 &  &  & PCA & \textbf{0.532} \\
\midrule
\multirow{8}{*}{AAV} & \multirow{4}{*}{1000} & \multirow{2}{*}{16} & Random & \textbf{0.417} \\
 &  &  & PCA & 0.393 \\
\cmidrule{3-5}
 &  & \multirow{2}{*}{64} & Random & 0.407 \\
 &  &  & PCA & \textbf{0.511} \\
\cmidrule{2-5}
 & \multirow{4}{*}{10000} & \multirow{2}{*}{16} & Random & 0.366 \\
 &  &  & PCA & \textbf{0.422} \\
\cmidrule{3-5}
 &  & \multirow{2}{*}{64} & Random & 0.534 \\
 &  &  & PCA & \textbf{0.564} \\
\bottomrule
\end{tabular}
\end{table*}

\subsection{Conservative Retrieval-Based Optimization Benchmark}
\label{subsec:retrieval_benchmark}

We evaluate \modelname in a conservative setting where optimized latent codes are mapped back to observed sequences via nearest-neighbor retrieval in Hamming space. This benchmark isolates the quality of the induced latent fitness landscape without confounding effects from neural decoding.

To ensure a fair comparison across optimization methods, all results are reported using PCA-based latent representations with a fixed latent dimension of 64 bits. Tables~\ref{tab:retrieval_gfp} and~\ref{tab:retrieval_aav} summarize the mean retrieval-based optimization performance across five runs (seeds 0, 1, 2, 3, and 4) on GFP and AAV, respectively.

Across both datasets and all data regimes, Simulated Annealing (SA), Genetic Algorithms (GA), and Greedy Hill Climb (GHC) consistently achieve strong performance, indicating that the induced QUBO landscape is well-structured for combinatorial search. In particular, these methods reliably identify high-fitness regions of the observed sequence manifold even in data-scarce settings, while latent Bayesian-style optimization (LBO) is generally less competitive.

These results demonstrate that PCA-based binary latent representations produce optimization-friendly landscapes that can be effectively explored by classical combinatorial solvers, providing a strong foundation for the full end-to-end design pipeline.

\begin{table*}[t]
\centering
\caption{Retrieval-based optimization results on GFP using PCA latent representations with 64 bits. Each row reports the mean performance across five runs (seeds 0, 1, 2, 3, and 4) for one optimizer. Optimizers include Simulated Annealing (SA), Genetic Algorithm (GA), Random Search (RS), Greedy Hill Climb (GHC), and Latent Bayesian Optimization (LBO). In each data size, the best performance in each metric is highlighted in bold. \revised{QUBO objective gain denotes the change in the learned surrogate objective relative to the initial code and should not be interpreted as an experimentally measured fitness improvement. Retrieved fitness and percentile are obtained from the measured nearest-neighbor variant.}}
\label{tab:retrieval_gfp}
\resizebox{\textwidth}{!}{%
\begin{tabular}{c l c c c}
\toprule
Train size & Optimizer & \revised{QUBO objective gain} $\uparrow$ & \revised{Retrieved experimental fitness} $\uparrow$ & \revised{Retrieved experimental percentile} $\uparrow$ \\
\midrule
\multirow{5}{*}{1000} & SA & 4.978 & 3.568 & 73.200 \\
 & GA & \textbf{5.044} & 3.440 & 60.925 \\
 & RS & 1.750 & 3.443 & 58.650 \\
 & GHC & 4.918 & \textbf{3.616} & \textbf{74.825} \\
 & LBO & -1.509 & 2.766 & 58.050 \\
\midrule
\multirow{5}{*}{2000} & SA & \textbf{11.529} & 3.558 & 70.700 \\
 & GA & 11.103 & 2.908 & 62.025 \\
 & RS & 4.770 & 2.593 & 43.175 \\
 & GHC & 10.230 & \textbf{3.580} & \textbf{75.013} \\
 & LBO & -1.119 & 2.953 & 47.700 \\
\midrule
\multirow{5}{*}{5000} & SA & \textbf{6.757} & 3.479 & 63.410 \\
 & GA & 6.628 & 3.689 & \textbf{83.330} \\
 & RS & 2.617 & 3.078 & 61.640 \\
 & GHC & 6.083 & \textbf{3.761} & 80.605 \\
 & LBO & -1.005 & 2.775 & 54.085 \\
\midrule
\multirow{5}{*}{10000} & SA & \textbf{4.162} & 3.220 & 69.740 \\
 & GA & 3.938 & \textbf{3.607} & \textbf{74.595} \\
 & RS & 1.609 & 3.590 & 71.938 \\
 & GHC & 3.688 & 3.175 & 64.562 \\
 & LBO & -1.253 & 1.836 & 28.780 \\
\bottomrule
\end{tabular}
}
\end{table*}

\begin{table*}[t]
\centering
\caption{Retrieval-based optimization results on AAV using PCA latent representations with 64 bits. Each row reports the mean performance across five runs (seeds 0, 1, 2, 3, and 4) for one optimizer. Optimizers include Simulated Annealing (SA), Genetic Algorithm (GA), Random Search (RS), Greedy Hill Climb (GHC), and Latent Bayesian Optimization (LBO). In each data size, the best performance in each metric is highlighted in bold. \revised{QUBO objective gain denotes the change in the learned surrogate objective relative to the initial code and should not be interpreted as an experimentally measured fitness improvement. Retrieved fitness and percentile are obtained from the measured nearest-neighbor variant.}}
\label{tab:retrieval_aav}
\resizebox{\textwidth}{!}{%
\begin{tabular}{c l c c c}
\toprule
Train size & Optimizer & \revised{QUBO objective gain} $\uparrow$ & \revised{Retrieved experimental fitness} $\uparrow$ & \revised{Retrieved experimental percentile} $\uparrow$ \\
\midrule
\multirow{5}{*}{1000} & SA & \textbf{11.364} & 3.303 & 88.225 \\
 & GA & 10.342 & \textbf{3.949} & 90.000 \\
 & RS & 1.224 & 1.945 & 77.550 \\
 & GHC & 9.362 & 3.902 & \textbf{91.400} \\
 & LBO & -9.071 & -0.606 & 53.200 \\
\midrule
\multirow{5}{*}{2000} & SA & \textbf{28.756} & 1.794 & 72.275 \\
 & GA & 27.108 & -1.256 & 47.325 \\
 & RS & 8.523 & 0.343 & 66.050 \\
 & GHC & 26.511 & \textbf{1.856} & \textbf{74.388} \\
 & LBO & -11.177 & -0.116 & 61.725 \\
\midrule
\multirow{5}{*}{5000} & SA & \textbf{14.627} & -0.654 & 50.165 \\
 & GA & 13.636 & 3.869 & \textbf{95.325} \\
 & RS & 4.153 & 1.477 & 77.715 \\
 & GHC & 13.162 & \textbf{6.643} & 94.975 \\
 & LBO & -5.378 & -1.871 & 44.250 \\
\midrule
\multirow{5}{*}{10000} & SA & \textbf{12.575} & 3.038 & 86.198 \\
 & GA & 12.321 & \textbf{3.720} & \textbf{92.948} \\
 & RS & 4.115 & 2.278 & 78.420 \\
 & GHC & 11.545 & 3.262 & 91.380 \\
 & LBO & -3.503 & 0.003 & 61.045 \\
\bottomrule
\end{tabular}
}
\end{table*}

\subsection{End-to-End Sequence Design Benchmark}
\label{subsec:design_benchmark}

We evaluate \modelname in an end-to-end protein design setting where optimized binary latent codes are decoded into protein sequences and scored using the external oracle. 
\revised{
This benchmark provides a model-based assessment of whether optimization in binary latent space leads to sequences with higher predicted fitness under the external sequence-level evaluator.
}

Figure~\ref{fig:section56_main} summarizes the best end-to-end performance achieved by PCA-based and random-projection binary latent representations across training regimes on GFP and AAV. For each dataset and train size, we report the strongest configuration after jointly selecting the optimizer and latent dimension.

Across most settings, PCA-based binary latent representations achieve the strongest performance. This advantage is especially pronounced at moderate and larger data regimes, where higher-dimensional PCA latents (32 and 64 bits) consistently produce the highest oracle scores. These findings are consistent with the decoder analysis in Section~\ref{subsec:repr_decoder}, where PCA was shown to be substantially more decodable than both random projections and learned AE/VAE latent spaces.

We also observe that larger latent dimensions generally improve end-to-end performance, particularly for PCA, indicating that structured latent spaces can benefit from increased capacity without sacrificing decodability. In contrast, random projections are less reliable and usually underperform PCA, although a few low-data settings exhibit isolated exceptions.

Among the optimizers, Greedy Hill Climb (GHC), Genetic Algorithm (GA), and Simulated Annealing (SA) are consistently competitive, while latent Bayesian-style search (LBO) is generally weaker. Overall, these results show that \modelname provides an effective framework for end-to-end protein design, where binary latent optimization combined with structured representations \revised{enables the prioritization of candidate protein sequences with high predicted fitness under a fixed oracle evaluation budget.}

\begin{table*}[t]
\centering
\caption{End-to-end protein design benchmark under a fixed external-oracle evaluation budget. For each dataset and train size, we report the best-performing random-projection and PCA-based \modelname configuration after optimization, decoding, and oracle scoring. Optimizers include Simulated Annealing (SA), Genetic Algorithm (GA), Random Search (RS), Greedy Hill Climb (GHC), and Latent Bayesian Optimization (LBO). The better performance between PCA and random projection is highlighted in bold.}
\label{tab:main}
\resizebox{\textwidth}{!}{%
\begin{tabular}{llccccc}
\toprule
Dataset & Train size & Representation & Optimizer & Latent dim & Best score $\uparrow$ & Top-10 mean $\uparrow$ \\
\midrule
\multirow{8}{*}{GFP} & \multirow{2}{*}{1000} & PCA & RS & 64 & 4.119 & 3.903 \\
 &  & Random & GA & 64 & \textbf{4.170} & \textbf{3.312} \\
\cmidrule{2-7}
 & \multirow{2}{*}{2000} & PCA & GA & 64 & \textbf{4.027} & \textbf{3.549} \\
 &  & Random & GHC & 32 & 3.448 & 2.693 \\
\cmidrule{2-7}
 & \multirow{2}{*}{5000} & PCA & GHC & 64 & \textbf{4.270} & \textbf{3.972} \\
 &  & Random & GHC & 32 & 3.815 & 2.228 \\
\cmidrule{2-7}
 & \multirow{2}{*}{10000} & PCA & GHC & 64 & \textbf{5.554} & \textbf{5.042} \\
 &  & Random & GHC & 32 & 4.806 & 3.829 \\
\midrule
\multirow{8}{*}{AAV} & \multirow{2}{*}{1000} & PCA & GHC & 64 & \textbf{5.232} & \textbf{2.962} \\
 &  & Random & GHC & 64 & 4.066 & -1.557 \\
\cmidrule{2-7}
 & \multirow{2}{*}{2000} & PCA & RS & 32 & 2.866 & 0.349 \\
 &  & Random & GHC & 32 & \textbf{4.184} & \textbf{-2.674} \\
\cmidrule{2-7}
 & \multirow{2}{*}{5000} & PCA & SA & 64 & \textbf{3.999} & \textbf{2.249} \\
 &  & Random & GA & 64 & 2.548 & -1.150 \\
\cmidrule{2-7}
 & \multirow{2}{*}{10000} & PCA & GA & 64 & \textbf{4.541} & \textbf{2.091} \\
 &  & Random & LBO & 16 & 1.197 & -4.205 \\
\bottomrule
\end{tabular}
}
\end{table*}

\begin{figure*}[t]
\centering
\includegraphics[width=0.9\textwidth]{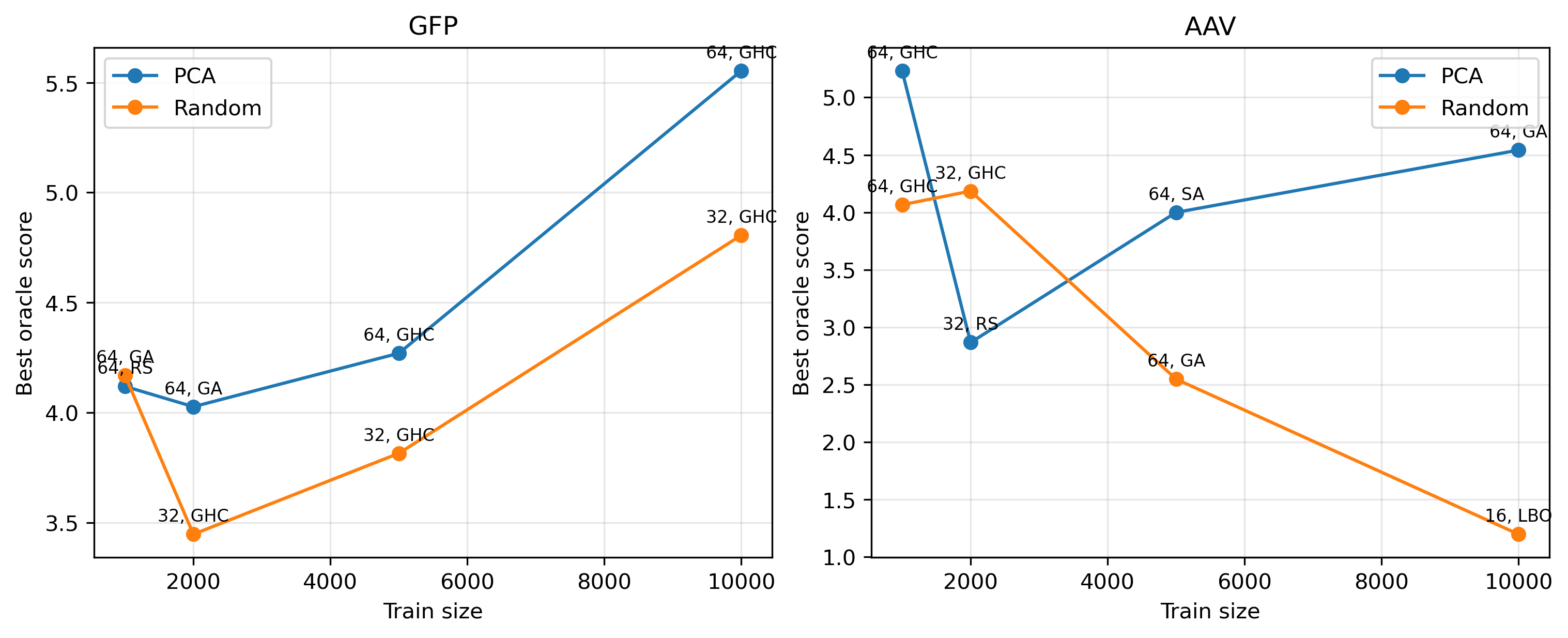}
\caption{End-to-end sequence design performance after optimization, decoding, and oracle scoring. 
For each dataset and training size, we plot the best-performing PCA-based and random-projection configuration. 
Each point is annotated with the corresponding latent dimension and optimizer. 
Across most settings, PCA-based binary latent representations achieve the strongest sequence-level performance, particularly at moderate and larger data regimes. }
\label{fig:section56_main}
\end{figure*}

\subsection{Data-Scarcity and Scaling Analysis}
\label{subsec:data_scarcity}

We next analyze how the performance of \modelname evolves as the amount of available fitness data increases. This experiment is particularly important because one of the primary motivations for using compact binary latent representations is to enable robust optimization under limited supervision.

Table~\ref{tab:scaling_summary} summarizes the best-performing oracle, latent representation, and design method across data regimes on both GFP and AAV. Several consistent patterns emerge.

\paragraph{Oracle behavior across data regimes.}
Across the evaluated datasets and training sizes, Gaussian process regression generally provides the strongest sequence-level oracle, achieving the highest test Spearman correlation in seven of the eight settings; Ridge Regression performs best for AAV with 1,000 samples. This highlights the strong performance of Gaussian processes across most regimes while also showing that no evaluator uniformly dominates every setting.

\paragraph{Latent representation.}
Across all datasets and data regimes, PCA-based binary latent representations consistently provide the strongest performance. This observation aligns with the findings in Section~\ref{subsec:repr_decoder}, where PCA was shown to maintain high entropy, active latent dimensions, and strong decoding performance, while AE/VAE representations collapse after binarization.

\paragraph{Optimization methods.}
The best-performing design method varies with both dataset and data regime. In low-data settings, simpler methods such as Random Search (RS) and Greedy Hill Climbing (GHC) are often competitive, reflecting the relatively limited structure of the learned landscape. As the dataset size increases, more sophisticated combinatorial methods such as Genetic Algorithm (GA) and Simulated Annealing (SA) become increasingly effective, suggesting that the learned QUBO landscape becomes more informative and structured.

\paragraph{Scaling trends.}
We observe a clear transition from less stable behavior in the low-data regime (1000–2000 samples) to more consistent and high-performing optimization in the moderate-data regime (5000–10000 samples). In particular, PCA-based latent representations combined with higher latent dimensionality enable improved end-to-end performance, indicating that structured latent spaces can effectively leverage additional data without sacrificing decodability or optimization stability.

\paragraph{Summary.}
Overall, these results demonstrate that \modelname provides a robust framework for protein design across data regimes. The combination of a strong Gaussian process oracle, structured PCA-based latent representations, and combinatorial optimization enables effective identification of high-fitness sequences, particularly in data-scarce settings, while continuing to improve as more data becomes available.

\begin{table*}[t]
\centering
\caption{Scaling summary across data regimes. We report the best-performing oracle, latent representation, and design method as a function of available training data. \revised{The best latent representation and design method are selected according to the top-10 mean, which reflects robust candidate-set performance rather than a single extreme prediction.}}
\label{tab:scaling_summary}
\resizebox{\textwidth}{!}{%
\begin{tabular}{l c l l l}
\toprule
Dataset & Train size & Best oracle & Best latent representation & Best design method \\
\midrule
\multirow{4}{*}{GFP} & 1000 & Gaussian Process & PCA & Random Search (RS) \\
 & 2000 & Gaussian Process & PCA & Genetic Algorithm (GA) \\
 & 5000 & Gaussian Process & PCA & Greedy Hill Climb (GHC) \\
 & 10000 & Gaussian Process & PCA & Greedy Hill Climb (GHC) \\
\midrule
\multirow{4}{*}{AAV} & 1000 & Gaussian Process & PCA & Greedy Hill Climb (GHC) \\
 & 2000 & Gaussian Process & PCA & Random Search (RS) \\
 & 5000 & Gaussian Process & PCA & Simulated Annealing (SA) \\
 & 10000 & Gaussian Process & PCA & Genetic Algorithm (GA) \\
\bottomrule
\end{tabular}
}
\end{table*}

\subsection{Implementation Details and Reporting Protocol}
\label{subsec:implementation_details}

Unless otherwise specified, all methods use the same upstream ESM embedding extractor in order to isolate the effect of latent representation and optimization strategy. 
\revised{
Model-family selection is performed using validation performance, while the model hyperparameters are fixed as specified in Appendix \ref{sec:additional_experiments}.
}
For design methods, we use a fixed candidate budget and report both the best returned sequence and the average performance of the top-\(K\) returned candidates.
For stochastic optimization, reported summary statistics are aggregated over five independent runs using seeds 0, 1, 2, 3, and 4. Model-fitting components use the fixed seeds specified in Appendix~\ref{sec:additional_experiments} unless otherwise noted.
For this work, we publicly release code, exact hyperparameter settings, and scripts for reproducing all tables and figures at: 
\begin{center}
    \url{https://github.com/HySonLab/Q-BIOLAT-Extended}
\end{center}
\section{Conclusions} \label{sec:conclusion}

In this work, we introduced \modelname, a framework for modeling and optimizing protein fitness landscapes in compact binary latent spaces. By transforming protein sequence representations into binary codes and learning a quadratic unconstrained binary optimization (QUBO) surrogate, \modelname enables direct application of combinatorial optimization methods to protein design.

Beyond its algorithmic formulation, our central contribution is a representation-centric perspective on protein fitness modeling. We show both theoretically and empirically that predictive accuracy alone is insufficient for effective optimization. Instead, the geometry of the latent space---captured through its induced QUBO interactions---plays a decisive role in determining search behavior, landscape smoothness, and the ability to identify high-fitness sequences.

Our experiments demonstrate three key findings. First, learned autoencoder-based representations collapse under binarization and fail to support optimization, despite achieving low reconstruction error. Second, simple structured representations such as PCA consistently produce more informative and decodable latent spaces, leading to better optimization performance. Third, combinatorial optimization methods such as simulated annealing, genetic algorithms, and greedy hill climbing are highly effective when applied to these structured binary latent landscapes, especially when combined with a 
\revised{sequence-level evaluator trained on experimentally measured fitness.}

We further show that \modelname performs robustly across data regimes, including data-scarce settings where traditional methods often struggle. As more data becomes available, both oracle quality and latent structure improve, leading to 
\revised{
increasingly strong model-based end-to-end candidate prioritization.
}

Overall, \modelname bridges protein language modeling, discrete representation learning, and combinatorial optimization, providing a principled framework for optimization-aware protein fitness modeling. By explicitly modeling the latent fitness landscape as a QUBO problem, our approach naturally connects modern machine learning with classical and quantum optimization paradigms. This opens several promising directions for future work, including integration with quantum annealing hardware, development of optimization-aware representation learning methods, and extension to multi-objective and constrained protein design.

\section*{Abbreviations}
AAV, adeno-associated virus; AE, autoencoder; DMS, deep mutational scanning; ESM, Evolutionary Scale Modeling; GA, genetic algorithm; GFP, green fluorescent protein; GHC, greedy hill climbing; LBO, latent Bayesian optimization; PCA, principal component analysis; PLM, protein language model; QUBO, quadratic unconstrained binary optimization; SA, simulated annealing; VAE, variational autoencoder.

\section*{Declarations}

\subsection*{Ethics approval and consent to participate}
Not applicable. This computational study uses publicly available protein fitness datasets and does not involve human participants, human data, or animal experiments.

\subsection*{Consent for publication}
Not applicable.

\subsection*{Availability of data and materials}
The implementation and data resources used for the analyses are available at \url{https://github.com/HySonLab/Q-BIOLAT-Extended}. The experiments use experimentally measured GFP and AAV fitness landscapes from the ProteinGym benchmark, as described in the manuscript.

\subsection*{Competing interests}
The author declares no competing interests.

\subsection*{Funding}
This research received no specific funding.

\subsection*{Authors' contributions}
T.-S.H. is the sole author of this manuscript and is responsible for the work presented herein.

\subsection*{Acknowledgements}
Not applicable.

\newpage

\appendix
\section{Theoretical Insights into Binary Latent Fitness Landscapes} \label{sec:theory}

In this section, we provide a theoretical perspective on how binary latent representations induce structured optimization landscapes, and how their geometry affects combinatorial search.

\subsection{QUBO as an Energy Landscape}

Recall that the QUBO surrogate defines the predicted fitness of a binary latent code $x \in \{0,1\}^m$ as:
\begin{equation}
\hat{f}(x) = h^\top x + \frac{1}{2} x^\top J x,
\end{equation}
where $h \in \mathbb{R}^m$ and $J \in \mathbb{R}^{m \times m}$ is a symmetric interaction matrix. This formulation can be equivalently interpreted as an energy-based model:
\begin{equation}
E(x) = -\hat{f}(x),
\end{equation}
where optimization corresponds to finding low-energy configurations.

Under this view, the binary latent space $\{0,1\}^m$ forms a discrete configuration space (a Boolean hypercube), and the QUBO defines an energy landscape over its vertices. The structure of this landscape is fully determined by the parameters $(h, J)$, which in turn depend on the choice of latent representation.

\subsection{Local Geometry and Bit-Flip Dynamics}

A fundamental operation in combinatorial optimization is a single-bit flip. Let $x^{(k)}$ denote the vector obtained by flipping the $k$-th bit of $x$. The change in predicted fitness is:
\begin{equation}
\Delta_k(x) = \hat{f}(x^{(k)}) - \hat{f}(x).
\end{equation}

For the QUBO model, this quantity can be expressed explicitly as:
\begin{equation}
\Delta_k(x) = (1 - 2x_k)\left(h_k + \sum_{\ell \neq k} J_{k\ell} x_\ell \right).
\end{equation}

This expression shows that local search behavior is governed by the interaction structure of $J$. In particular:
\begin{itemize}
    \item The term $h_k$ captures the intrinsic contribution of the $k$-th latent dimension.
    \item The sum $\sum_{\ell} J_{k\ell} x_\ell$ captures how other latent variables influence the effect of flipping bit $k$.
\end{itemize}

Thus, optimization trajectories depend not only on the current state $x$, but also on the global interaction structure encoded in $J$.

\subsection{Landscape Smoothness and Ruggedness}

The difficulty of optimization is closely related to the smoothness of the energy landscape. We characterize local variability using the variance of $\Delta_k(x)$ over the latent space:
\begin{equation}
\mathrm{Var}[\Delta_k(x)] \approx \sum_{\ell} J_{k\ell}^2.
\end{equation}

This suggests that the norm of the interaction coefficients controls the ruggedness of the landscape:
\begin{itemize}
    \item Small $\|J_{k\cdot}\|$ leads to smoother landscapes with more predictable local improvements.
    \item Large $\|J_{k\cdot}\|$ leads to highly variable local changes and a proliferation of local optima.
\end{itemize}

Consequently, the structure of $J$ plays a central role in determining optimization difficulty.

\subsection{Spectral Structure of the Interaction Matrix}

Further insight can be obtained by analyzing the spectrum of the interaction matrix $J$. Let $\{\lambda_i\}$ denote its eigenvalues. Then:
\begin{itemize}
    \item Low-rank or structured $J$ (few dominant eigenvalues) corresponds to landscapes with coherent global directions, facilitating optimization.
    \item Random or full-rank $J$ with dispersed eigenvalues corresponds to more irregular, spin-glass-like landscapes with many local minima.
\end{itemize}

Since $J$ is learned from latent representations, its spectral properties are directly influenced by the choice of representation (e.g., PCA vs random projection vs learned encoders).

\subsection{Representation--Optimization Decoupling}

A key implication of this analysis is that predictive performance alone does not determine optimization behavior. Two representations may yield similar surrogate accuracy (e.g., similar Spearman correlation), yet induce different interaction matrices $J$ with distinct geometric properties.

\begin{quote}
\textbf{Observation:} Representations with similar predictive accuracy can induce fundamentally different optimization landscapes.
\end{quote}

This phenomenon arises because predictive metrics evaluate pointwise accuracy, whereas optimization depends on the global structure of the energy landscape. As a result, representation learning for protein design should account not only for prediction quality, but also for the induced landscape geometry.

\subsection{Implications for Latent Representation Design}

The above analysis suggests several principles for designing effective latent representations:
\begin{itemize}
    \item Structured representations (e.g., PCA or learned encoders) tend to produce smoother and more optimization-friendly landscapes.
    \item Overly high-dimensional latent spaces can increase model expressivity but also introduce ruggedness, making optimization more difficult.
    \item Learned representations can potentially balance expressivity and smoothness by aligning latent dimensions with fitness-relevant directions.
\end{itemize}

These insights provide a theoretical foundation for understanding the empirical observations in Section~\ref{sec:experiments} and motivate the development of optimization-aware representation learning methods.

Here, we formalize how the learned binary representation determines the geometry, smoothness, and identifiability of the induced QUBO landscape. Our goal is not to solve the global QUBO exactly, but to characterize the quantities that govern local search, global variation, and the stability of the learned surrogate.

\subsection{Energy Landscape and Local Fields}

Recall that the QUBO surrogate defines the predicted fitness of a binary latent code $x \in \{0,1\}^m$ as:
\begin{equation}
\hat{f}(x) = h^\top x + \frac{1}{2} x^\top J x,
\end{equation}
where $h \in \mathbb{R}^m$ and $J \in \mathbb{R}^{m \times m}$ is a symmetric interaction matrix with zero diagonal. Equivalently, we may define the energy:
\begin{equation}
E(x) = -\hat{f}(x),
\end{equation}
so that maximizing predicted fitness is equivalent to minimizing energy over the Boolean hypercube.

A useful quantity for analyzing this landscape is the \emph{local field}:
\begin{equation}
g_k(x) = h_k + \sum_{\ell \neq k} J_{k\ell} x_\ell,
\end{equation}
which measures the effective force acting on the $k$-th bit given the remaining coordinates.

\paragraph{Proposition (Bit-flip gain and local optimality).}
Let $x^{(k)}$ denote the binary code obtained by flipping the $k$-th bit of $x$. Then the change in predicted fitness induced by a single-bit flip is:
\begin{equation}
\Delta_k(x) = \hat{f}(x^{(k)}) - \hat{f}(x)
= (1 - 2x_k)\, g_k(x).
\end{equation}
Consequently, a binary code $x^\star$ is a local maximizer with respect to single-bit flips if and only if:
\begin{equation}
\Delta_k(x^\star) \leq 0 \qquad \text{for all } k,
\end{equation}
or equivalently,
\begin{equation}
x_k^\star =
\begin{cases}
1, & g_k(x^\star) \geq 0,\\
0, & g_k(x^\star) \leq 0,
\end{cases}
\end{equation}
with either value allowed when $g_k(x^\star)=0$.

\paragraph{Proof.}
Write $x^{(k)} = x + (1-2x_k)e_k$, where $e_k$ is the $k$-th standard basis vector. Substituting into the QUBO objective and using the symmetry of $J$ together with $J_{kk}=0$ gives:
\[
\hat{f}(x^{(k)}) - \hat{f}(x)
= (1-2x_k)\left(h_k + \sum_{\ell \neq k} J_{k\ell} x_\ell\right).
\]
The local-optimality condition follows immediately by requiring that no single-bit flip improves the objective.
\hfill $\square$

This proposition shows that local search in \modelname is governed by the collection of local fields $\{g_k(x)\}_{k=1}^m$. In particular, the interaction matrix $J$ determines how strongly each latent dimension depends on the current configuration of the others.

\subsection{Hamming Smoothness and Spectral Control}

We next quantify how much the QUBO objective can change between two latent codes.

\paragraph{Proposition (Hamming--Lipschitz continuity).}
For any $x,y \in \{0,1\}^m$,
\begin{equation}
|\hat{f}(x) - \hat{f}(y)|
\leq \left( \|h\|_\infty + \|J\|_\infty \right)\, d_H(x,y),
\end{equation}
where $d_H(x,y)$ is the Hamming distance and $\|J\|_\infty = \max_k \sum_{\ell=1}^m |J_{k\ell}|$ is the matrix infinity norm.

\paragraph{Proof.}
Let $\delta = x-y$. Then $\|\delta\|_1 = d_H(x,y)$ and
\begin{equation}
\hat{f}(x)-\hat{f}(y)
= h^\top \delta + \frac{1}{2}\delta^\top J (x+y).
\end{equation}
Therefore,
\[
|\hat{f}(x)-\hat{f}(y)|
\leq \|h\|_\infty \|\delta\|_1
+ \frac{1}{2}\|\delta\|_1 \|J\|_\infty \|x+y\|_\infty.
\]
Since $x,y \in \{0,1\}^m$, we have $\|x+y\|_\infty \leq 2$, which yields the result.
\hfill $\square$

This result shows that the quantity
\begin{equation}
L_H := \|h\|_\infty + \|J\|_\infty
\end{equation}
acts as a global smoothness constant of the landscape in Hamming space.

\paragraph{Corollary (Spectral control of landscape variation).}
For any $x,y \in \{0,1\}^m$,
\begin{equation}
|\hat{f}(x) - \hat{f}(y)|
\leq \sqrt{d_H(x,y)}
\left( \|h\|_2 + \sqrt{m}\,\|J\|_2 \right),
\end{equation}
where $\|J\|_2$ denotes the spectral norm of $J$.

\paragraph{Proof.}
Using the same decomposition with $\delta=x-y$,
\[
|\hat{f}(x)-\hat{f}(y)|
\leq \|h\|_2 \|\delta\|_2
+ \frac{1}{2}\|\delta\|_2 \|J\|_2 \|x+y\|_2.
\]
Now $\|\delta\|_2 \leq \sqrt{d_H(x,y)}$ and $\|x+y\|_2 \leq 2\sqrt{m}$, which implies the bound.
\hfill $\square$

This corollary directly motivates the spectral norm of $J$ as a global diagnostic of landscape roughness. In particular, smaller $\|J\|_2$ implies smaller worst-case variation across the hypercube.

\subsection{Ruggedness via Bit-Flip Variability}

To study local ruggedness, we analyze the variability of single-bit moves over the hypercube.

\paragraph{Proposition (Exact second moment of bit-flip gains).}
Assume $x$ is drawn uniformly from $\{0,1\}^m$. Then for each bit $k$,
\begin{equation}
\mathbb{E}[\Delta_k(x)] = 0,
\end{equation}
and
\begin{equation}
\mathbb{E}[\Delta_k(x)^2]
=
\left(h_k + \frac{1}{2}\sum_{\ell \neq k} J_{k\ell}\right)^2
+ \frac{1}{4}\sum_{\ell \neq k} J_{k\ell}^2.
\end{equation}

\paragraph{Proof.}
Using $\Delta_k(x) = (1-2x_k)g_k(x)$, note that $(1-2x_k)$ is independent of $g_k(x)$ and has mean zero and squared value one under the uniform measure. Hence,
\[
\mathbb{E}[\Delta_k(x)] = \mathbb{E}[1-2x_k]\;\mathbb{E}[g_k(x)] = 0,
\]
and,
\[
\mathbb{E}[\Delta_k(x)^2] = \mathbb{E}[g_k(x)^2].
\]
Since the coordinates $\{x_\ell\}_{\ell\neq k}$ are independent Bernoulli$(1/2)$ variables,
\[
\mathbb{E}[g_k(x)] = h_k + \frac{1}{2}\sum_{\ell \neq k} J_{k\ell},
\qquad
\mathrm{Var}(g_k(x)) = \frac{1}{4}\sum_{\ell \neq k} J_{k\ell}^2.
\]
The result follows from $\mathbb{E}[g_k(x)^2] = \mathrm{Var}(g_k(x)) + \mathbb{E}[g_k(x)]^2$.
\hfill $\square$

This proposition yields an exact measure of local ruggedness. We define the average ruggedness:
\begin{equation}
\mathcal{R}(J,h)
=
\frac{1}{m}\sum_{k=1}^m \mathbb{E}[\Delta_k(x)^2].
\end{equation}
When the latent bits are approximately balanced and the local fields are centered, the dominant contribution is:
\begin{equation}
\mathcal{R}(J,h) \approx \frac{1}{4m}\|J\|_F^2,
\end{equation}
showing that interaction energy directly drives local variability. Thus, both $\|J\|_F$ and $\|J\|_2$ are theoretically meaningful diagnostics of ruggedness.

\subsection{Low-Rank Structure and Effective Optimization Dimension}

The spectral structure of $J$ determines whether the landscape is effectively governed by a small number of collective directions. Let
\begin{equation}
J = \sum_{i=1}^m \lambda_i u_i u_i^\top
\end{equation}
be the spectral decomposition of $J$, with eigenvalues ordered by decreasing magnitude:
\[
|\lambda_1| \geq |\lambda_2| \geq \cdots \geq |\lambda_m|.
\]
For a given rank $r$, we define the truncated interaction matrix:
\begin{equation}
J_r = \sum_{i=1}^r \lambda_i u_i u_i^\top,
\end{equation}
and the corresponding truncated objective:
\begin{equation}
\hat{f}_r(x) = h^\top x + \frac{1}{2}x^\top J_r x.
\end{equation}

\paragraph{Proposition (Pointwise low-rank approximation).}
For every $x \in \{0,1\}^m$,
\begin{equation}
|\hat{f}(x) - \hat{f}_r(x)|
\leq \frac{m}{2}\,\|J-J_r\|_2.
\end{equation}

\paragraph{Proof.}
We have:
\[
\hat{f}(x)-\hat{f}_r(x)
=
\frac{1}{2}x^\top (J-J_r)x.
\]
Therefore,
\[
|\hat{f}(x)-\hat{f}_r(x)|
\leq \frac{1}{2}\|J-J_r\|_2 \|x\|_2^2.
\]
Since $x \in \{0,1\}^m$, we have $\|x\|_2^2 \leq m$, which proves the result.
\hfill $\square$

\paragraph{Corollary (Optimization gap under low-rank truncation).}
Let $x^\star \in \arg\max_x \hat{f}(x)$ and $x_r^\star \in \arg\max_x \hat{f}_r(x)$. Then,
\begin{equation}
\hat{f}(x^\star) - \hat{f}(x_r^\star)
\leq m\,\|J-J_r\|_2.
\end{equation}

\paragraph{Proof.}
From the previous proposition, $|\hat{f}(x)-\hat{f}_r(x)| \leq \varepsilon$ for all $x$, where $\varepsilon = \frac{m}{2}\|J-J_r\|_2$. Hence,
\[
\hat{f}(x^\star)
\leq \hat{f}_r(x^\star) + \varepsilon
\leq \hat{f}_r(x_r^\star) + \varepsilon
\leq \hat{f}(x_r^\star) + 2\varepsilon.
\]
Substituting the value of $\varepsilon$ yields the claim.
\hfill $\square$

This corollary provides a concrete guarantee: if the spectral tail of $J$ is small, then the optimization landscape is effectively low-dimensional, and the dominant eigenmodes of $J$ capture most of the optimization-relevant structure.

A useful scalar summary of this concentration is the \emph{effective rank}:
\begin{equation}
r_{\mathrm{eff}}(J) = \frac{\|J\|_F^2}{\|J\|_2^2},
\end{equation}
which is small when interaction energy is concentrated in a few dominant modes.

\subsection{Identifiability of the Learned QUBO Landscape}

Let $\phi(x) \in \mathbb{R}^p$ denote the QUBO feature map containing all linear and pairwise terms, where
\begin{equation}
p = m + \frac{m(m-1)}{2}.
\end{equation}
Let $\Phi(X) \in \mathbb{R}^{N \times p}$ be the design matrix formed from the observed latent codes in the training set.

\paragraph{Proposition (Non-identifiability under rank deficiency).}
If $\mathrm{rank}(\Phi(X)) < p$, then the QUBO parameters are not uniquely identifiable from the observed data. In particular, there exists a nonzero vector $u \in \mathbb{R}^p$ such that:
\begin{equation}
\Phi(X)u = 0.
\end{equation}
Therefore, for any parameter vector $w$, both $w$ and $w+u$ induce identical predictions on all observed training codes, but can differ on unseen binary codes.

\paragraph{Proof.}
If $\mathrm{rank}(\Phi(X)) < p$, then the null space of $\Phi(X)$ is nontrivial, so there exists $u \neq 0$ with $\Phi(X)u=0$. Hence,
\[
\Phi(X)(w+u) = \Phi(X)w,
\]
so the two parameterizations agree on all observed codes. For an unseen code $x$ with $\phi(x)^\top u \neq 0$, the predicted fitness differs:
\[
\phi(x)^\top(w+u) \neq \phi(x)^\top w.
\]
\hfill $\square$

\paragraph{Corollary (Dimension--sample trade-off).}
Since $p = \mathcal{O}(m^2)$, increasing the latent dimension rapidly enlarges the number of QUBO parameters. When the number of observed sequences is not sufficiently large relative to $p$, the landscape away from the observed codes is only weakly constrained.

This result provides a formal explanation for the trade-off between latent dimensionality, surrogate generalization, and optimization stability. Ridge regularization selects one stable solution among many possibilities, but the induced landscape remains strongly dependent on the representation through the feature map $\Phi(X)$.

\subsection{Why Prediction and Optimization Can Decouple}

The previous results formalize why predictive accuracy alone does not determine optimization behavior.

First, predictive metrics such as Spearman correlation evaluate surrogate quality on a finite observed set, whereas optimization depends on the geometry of the entire hypercube. Second, our propositions show that when the QUBO feature map is rank-deficient or weakly constrained, multiple interaction structures can be consistent with essentially the same observed fit while differing away from the observed manifold. 

Together, these observations imply that two latent representations can achieve similar predictive accuracy while inducing substantially different optimization landscapes. This representation--optimization decoupling is not an anomaly, but rather a structural consequence of learning a surrogate over a large discrete latent space.

\subsection{Practical Consequences}

The theory above yields several experimentally testable predictions. Landscapes with smaller $\|J\|_2$, faster spectral decay, and smaller ruggedness $\mathcal{R}(J,h)$ should be easier to optimize. Likewise, increasing latent dimensionality without sufficient data should increase ambiguity in the learned landscape and make optimization less stable.

Further empirical analyses of these theoretical predictions are left for future work.
\section{Additional Experimental Details} \label{sec:additional_experiments}

\subsection{Protein Sequence Embeddings (ESM)}

We represent protein sequences using pretrained protein language models from the ESM family. In particular, we use the \texttt{facebook/esm2\_t6\_8M\_UR50D} checkpoint from the Hugging Face Transformers library.

Given a protein sequence, we first convert it to uppercase and remove non-alphabetic characters. The cleaned sequence is then tokenized using the corresponding ESM tokenizer. We apply padding and truncation with a maximum sequence length of 1024.

The tokenized sequence is passed through the pretrained transformer model in evaluation mode without gradient computation. The model produces contextualized residue-level representations, from which we obtain a fixed-length sequence embedding by mean pooling over the sequence dimension using the attention mask:
\begin{equation}
e = \frac{\sum_{i=1}^{L} m_i H_i}{\sum_{i=1}^{L} m_i},
\end{equation}
where $H_i$ denotes the hidden representation of the $i$-th residue and $m_i$ is the corresponding attention mask.

Embeddings are computed in mini-batches of size 8 and stored as dense vectors in \texttt{float32} format. These embeddings are used as inputs to downstream components, including the external fitness oracle and latent representation learning.

\subsection{External Sequence-Level Fitness Oracle}

For sequence-level fitness evaluation, we trained an external oracle on dense protein sequence embeddings. Each dataset was stored as a NumPy archive containing (i) dense embeddings, (ii) scalar fitness labels, and (iii) the corresponding protein sequences. The embeddings were represented in \texttt{float32}, while fitness targets were stored in \texttt{float64}.

\paragraph{Data splitting.}
We partitioned each dataset into training, validation, and test sets using a two-stage random split. First, we split the data into a training + validation set and a held-out test set with a test ratio of $0.2$. Then, we further split the training + validation set into training and validation subsets using a validation ratio of $0.1$ relative to the full dataset. This results in an effective split of $70\%/10\%/20\%$ for training, validation, and test sets, respectively. All splits were performed with a fixed random seed of 42 for reproducibility.

\paragraph{Evaluation metrics.}
We evaluated model performance using four standard regression metrics: Spearman correlation, Pearson correlation, root mean squared error (RMSE), and mean absolute error (MAE). Metrics were reported on the training, validation, and test sets.

\paragraph{Models.}
We considered three regression models on top of the same embedding representation: Ridge Regression, XGBoost, and Gaussian Process Regression.

\paragraph{Ridge Regression.}
We used a pipeline consisting of feature standardization via \texttt{StandardScaler} followed by Ridge regression. The regularization strength was set to $\alpha = 1.0$, and the \texttt{svd} solver was used.

\paragraph{XGBoost.}
We used the \texttt{XGBRegressor} implementation with the following hyperparameters: number of trees $=300$, maximum tree depth $=6$, learning rate $=0.05$, subsampling ratio $=0.9$, column subsampling ratio $=0.9$, $\ell_1$ regularization coefficient $=0.0$, and $\ell_2$ regularization coefficient $=1.0$. The objective function was squared error regression. 
Training was performed using 4 CPU threads, and the random seed was fixed to 42.

\paragraph{Gaussian Process Regression.}
We used a pipeline consisting of feature standardization followed by Gaussian Process Regression (GPR). The kernel was defined as:
\begin{equation}
k(x, x') = C \cdot \mathrm{RBF}(x, x') + \mathrm{WhiteKernel},
\end{equation}
where the constant kernel $C$ was initialized to $1.0$ with bounds $[10^{-3}, 10^{3}]$, the RBF kernel had an initial length scale of $1.0$ with bounds $[10^{-3}, 10^{3}]$, and the white noise level was initialized to $10^{-3}$ with bounds $[10^{-6}, 10^{1}]$. The GPR model used \texttt{normalize\_y=True} and performed $2$ restarts of the optimizer.

\paragraph{Training protocol.}
All models were trained on both GFP and AAV datasets under four data regimes with training set sizes of $\{1000, 2000, 5000, 10000\}$. For each dataset and data regime, the same fixed hyperparameters and random seed were used. No additional hyperparameter search was performed. Models were trained on the training split and evaluated on validation and test splits.

\paragraph{Outputs.}
For each trained model, we stored (i) the serialized model parameters, (ii) evaluation metrics in JSON format, and (iii) test-set predictions including sequences, ground-truth fitness values, and predicted fitness values.

\subsection{Latent Representation Learning and Decoder Implementation}

We construct binary latent representations from dense ESM embeddings using four approaches: PCA, random projection, deterministic autoencoder (AE), and variational autoencoder (VAE). For all methods, we evaluate latent dimensions of 8, 16, 32, and 64.

For PCA, we apply principal component analysis to the dense embeddings in each sampled dataset and retain the top components corresponding to the target latent dimension. The PCA transformation and per-dimension median thresholds are computed once from that sampled embedding set before the downstream supervised splits, and the resulting binary representations are precomputed and reused across experiments. This is an unsupervised representation-construction step: fitness labels are not used when fitting PCA or computing the thresholds.

For random projection, we project embeddings using a Gaussian random matrix with variance scaled by the input dimension, followed by the same median-based binarization. This provides a lightweight baseline without learned structure.

For AE and VAE, we use simple fully connected neural networks. The autoencoder consists of a two-layer encoder and decoder with a hidden dimension of 256 and ReLU activations. The encoder maps the input embedding to the latent space, and the decoder reconstructs the original embedding. The VAE uses a similar architecture with a shared encoder backbone and separate linear heads for the mean and log-variance. Both models are trained using mean squared reconstruction loss, with an additional KL divergence term for the VAE weighted by $\beta = 10^{-3}$. Latent representations are binarized by thresholding at zero.

Both AE and VAE are trained using the Adam optimizer with learning rate $10^{-3}$, weight decay $10^{-5}$, batch size 32, and 200 training epochs. All models are trained with a fixed random seed of 42 on CPU.

For decoding, we train a mutation-conditioned neural decoder that maps latent codes back to protein sequences. The decoder consists of two fully connected layers with hidden dimension 256 and ReLU activations, followed by two output heads: one for predicting mutation positions and one for predicting amino acid identities at mutated positions. The mutation prediction head outputs a vector of length equal to the sequence length, while the amino acid head outputs logits over 20 amino acids per position.

The decoder is trained using a combination of binary cross-entropy loss for mutation prediction and cross-entropy loss for amino acid prediction, with a higher weight assigned to the mutation loss to account for class imbalance. Training is performed using Adam with learning rate $10^{-3}$, batch size 32, and 100 epochs. The best model is selected based on validation performance.

All latent models and decoders are trained across both GFP and AAV datasets and all data regimes using consistent hyperparameters.

\subsection{Internal QUBO Surrogate Implementation}

The internal surrogate operates directly on binary latent representations of dimension $m \in \{8,16,32,64\}$ and models the fitness function as a combination of linear and pairwise interaction terms. Given a binary latent vector, the surrogate includes all $m$ linear terms and all $\frac{m(m-1)}{2}$ pairwise interaction terms. These features are constructed explicitly by concatenating the original binary variables with all pairwise products between distinct latent dimensions.

The model parameters are fitted using ridge regression with $\ell_2$ regularization. Specifically, the surrogate is trained by solving a linear system involving the feature matrix and the target fitness values, with a regularization coefficient set to $10^{-3}$. This closed-form solution enables efficient training even when the number of pairwise features grows quadratically with the latent dimension.

After training, the learned parameters are decomposed into a vector of linear coefficients and a symmetric matrix of pairwise interaction coefficients. The diagonal entries of the interaction matrix are set to zero, and symmetry is enforced by assigning equal weights to $(i,j)$ and $(j,i)$. Predictions are computed by combining the linear contributions and pairwise interactions over the binary latent variables.

We evaluate the surrogate using standard regression metrics including RMSE, $R^2$, and Spearman correlation on both training and held-out test sets. The model is trained using a fixed random seed of 42 and a train/test split ratio of 80/20.

For comparison, the implementation also includes a small multilayer perceptron (MLP) baseline trained using mini-batch gradient descent with ReLU activations. The MLP uses two hidden layers with dimensions 64 and 32, learning rate $10^{-3}$, weight decay $10^{-5}$, batch size 64, and 400 training epochs. However, this baseline is used only for reference, while all optimization experiments in the main paper rely on the QUBO surrogate.

The surrogate is trained across all dataset sizes and latent dimensions and serves as the objective function for downstream combinatorial optimization methods.

\subsection{Combinatorial Optimization Methods}

We optimize the learned QUBO surrogate in the binary latent space using five methods: Simulated Annealing (SA), Genetic Algorithm (GA), Random Search (RS), Greedy Hill Climbing (GHC), and a lightweight latent Bayesian optimization (LBO) baseline. All methods operate directly on binary vectors of dimension $m \in \{8,16,32,64\}$.

Simulated Annealing (SA) starts from an initial binary code and iteratively proposes single-bit flips. At each step, a bit index is selected uniformly at random, and the energy change induced by flipping that bit is computed efficiently using the learned QUBO parameters. Moves that improve the objective are always accepted, while worse moves are accepted with a temperature-dependent probability. The temperature follows an exponential decay schedule with initial temperature $T_0=1.0$, minimum temperature $10^{-4}$, and decay factor $0.999$. The algorithm runs for 20{,}000 steps and keeps track of the best solution encountered during the trajectory.

The Genetic Algorithm (GA) maintains a population of candidate binary solutions and evolves them over multiple generations. We use a population size of 64 and run the algorithm for 150 generations. At each generation, a subset of top-performing individuals (elite size 4) is preserved. New individuals are generated using tournament selection with tournament size 3, followed by single-point crossover applied with probability 0.9. Offspring are further perturbed using independent bit-flip mutation with mutation rate 0.02. The population is updated at each generation, and the best solution across all generations is returned.

Random Search (RS) samples binary latent codes independently and uniformly at random. We evaluate 10{,}000 candidate solutions and return the one with the highest objective value.

Greedy Hill Climbing (GHC) starts from an initial binary code and iteratively improves it by exhaustively evaluating all single-bit flips. At each iteration, the algorithm selects the bit flip that yields the largest improvement in the objective and updates the current solution accordingly. The process is repeated until no improving move exists or until a maximum of 100 passes over all bits is reached.

The Latent Bayesian Optimization (LBO) baseline uses a kernel-based uncertainty heuristic over binary latent codes. Given a set of seed solutions, the method samples 5{,}000 candidate binary vectors uniformly at random and evaluates an acquisition function based on an upper confidence bound. The predictive mean is computed using a kernel-weighted average over seed points, where the kernel is defined using an RBF function over Hamming distance with length scale 4.0. The acquisition function is given by the sum of the predictive mean and an exploration term weighted by $\beta = 1.0$. The candidate with the highest acquisition value is selected.

All optimization methods are applied consistently across datasets, latent dimensions, and data regimes using fixed hyperparameters. For the multiseed optimization benchmark, each configuration is evaluated with seeds 0, 1, 2, 3, and 4, and reported means aggregate these five runs.

\subsection{Additional Experimental Results}

We provide detailed end-to-end design results across all dataset sizes and latent dimensions in Figures~\ref{fig:appendix_gfp_methods} and~\ref{fig:appendix_aav_methods}, as well as Tables~\ref{tab:appendix_gfp_1000}--\ref{tab:appendix_aav_10000}. Figure~\ref{fig:appendix_gfp_methods} presents the performance trends on the GFP dataset across different training regimes, while Figure~\ref{fig:appendix_aav_methods} shows the corresponding results for the AAV dataset. These figures illustrate how performance varies with latent dimensionality and optimization method. Tables~\ref{tab:appendix_gfp_1000}--\ref{tab:appendix_gfp_10000} report detailed results for the GFP dataset at training sizes of 1000, 2000, 5000, and 10000 samples, respectively, while Tables~\ref{tab:appendix_aav_1000}--\ref{tab:appendix_aav_10000} present the corresponding results for the AAV dataset across all data regimes. Each table includes both the best achieved score and the average performance of the top-10 candidates for each configuration, enabling a more fine-grained comparison across latent representations, optimization methods, and data regimes. Overall, these additional results further corroborate the trends observed in the main text, including the consistent advantage of PCA-based latent representations and the strong performance of combinatorial optimization methods such as greedy hill climbing, genetic algorithms, and simulated annealing.

\begin{figure*}[t]
\centering
\begin{subfigure}[t]{0.48\textwidth}
\centering
\includegraphics[width=\linewidth]{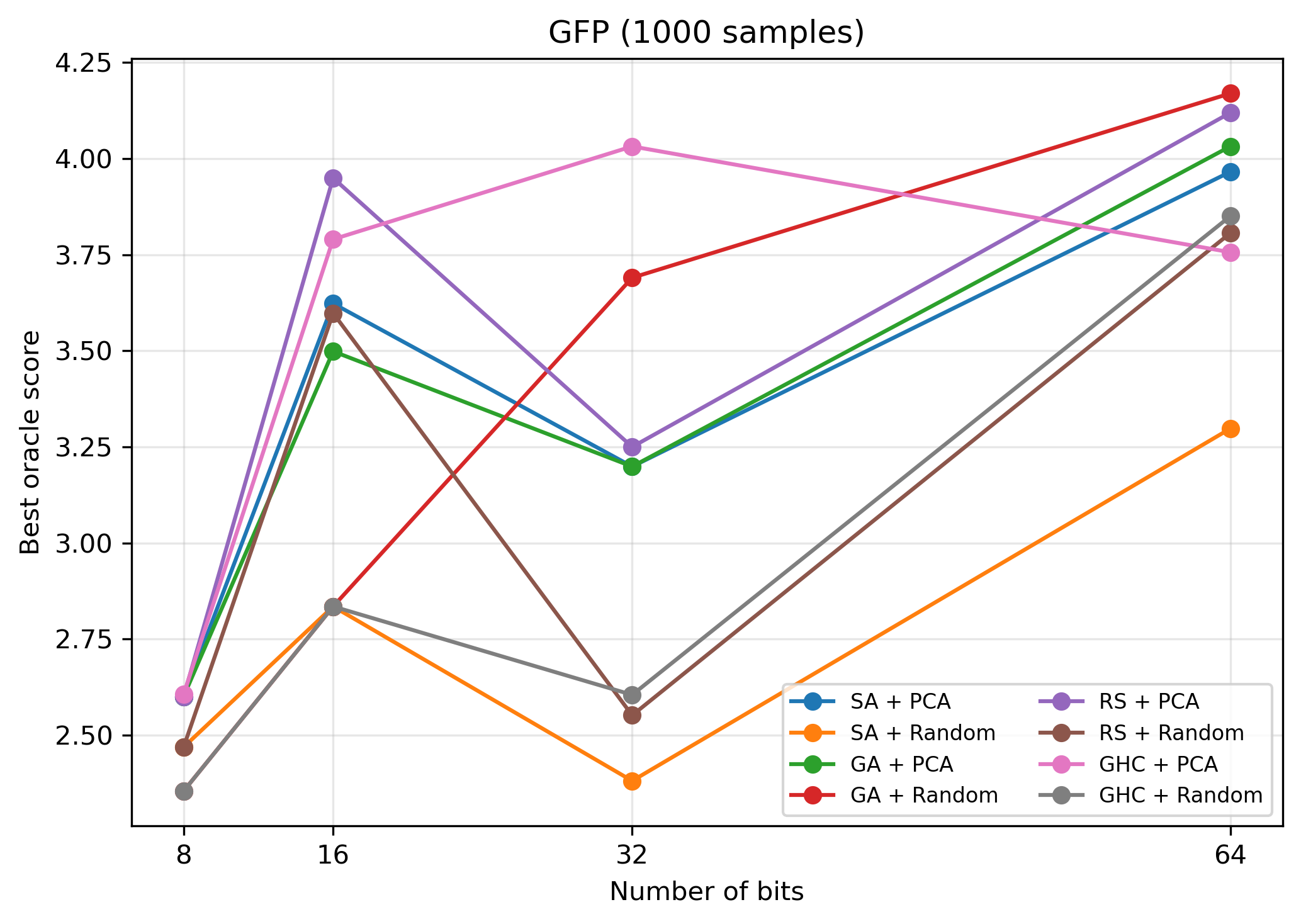}
\caption{GFP with 1000 training samples.}
\end{subfigure}
\hfill
\begin{subfigure}[t]{0.48\textwidth}
\centering
\includegraphics[width=\linewidth]{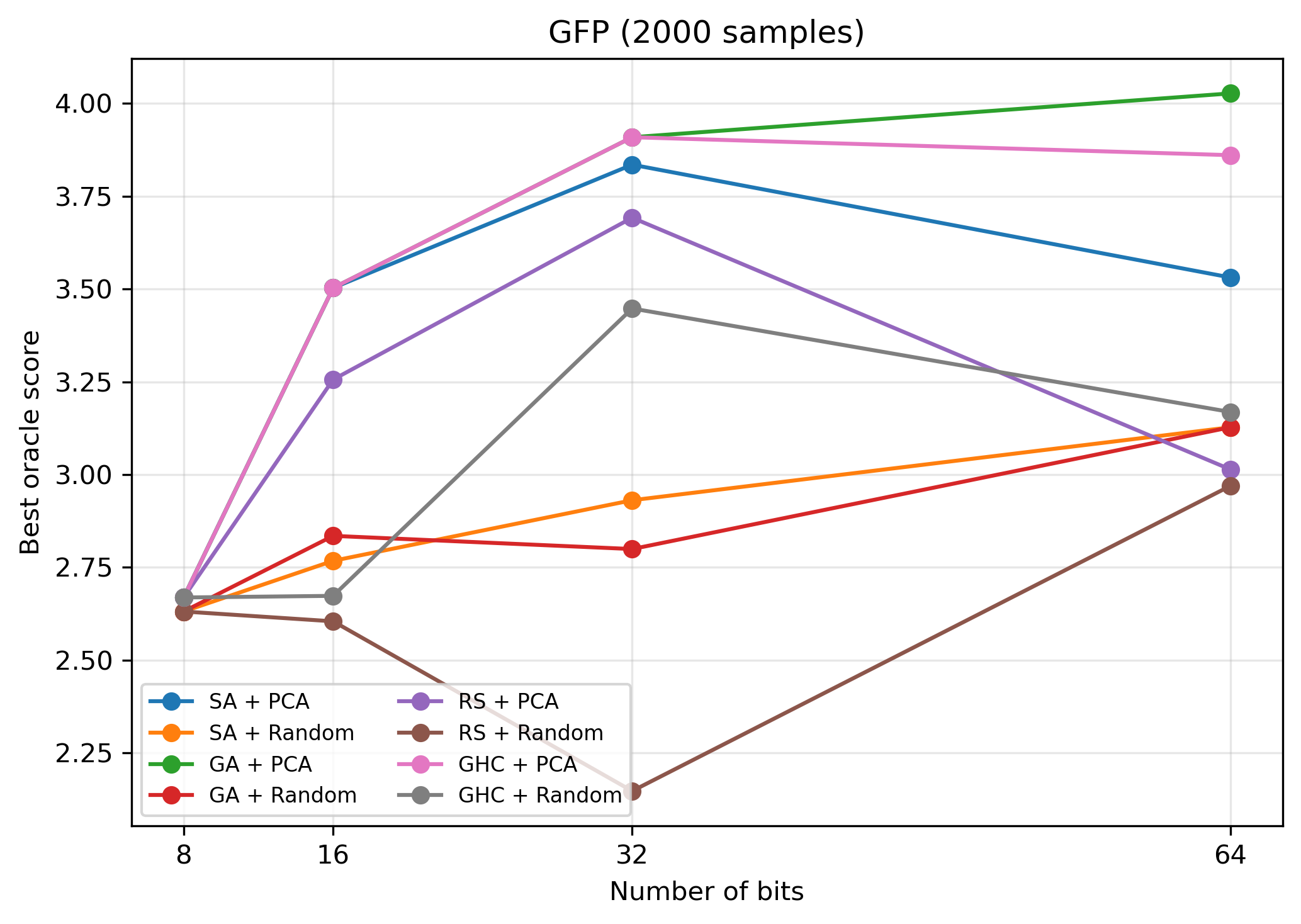}
\caption{GFP with 2000 training samples.}
\end{subfigure}
\vspace{0.5em}
\begin{subfigure}[t]{0.48\textwidth}
\centering
\includegraphics[width=\linewidth]{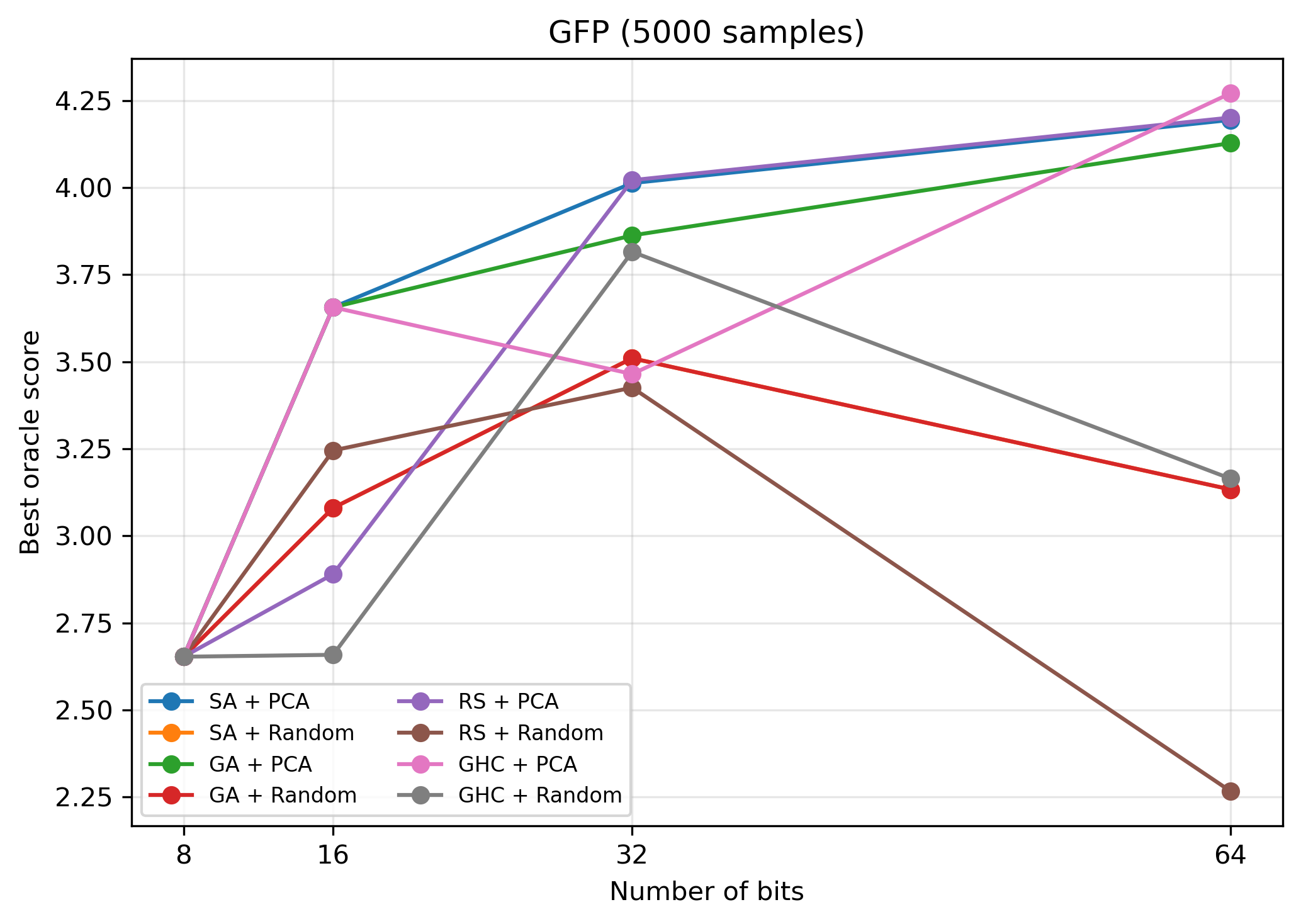}
\caption{GFP with 5000 training samples.}
\end{subfigure}
\hfill
\begin{subfigure}[t]{0.48\textwidth}
\centering
\includegraphics[width=\linewidth]{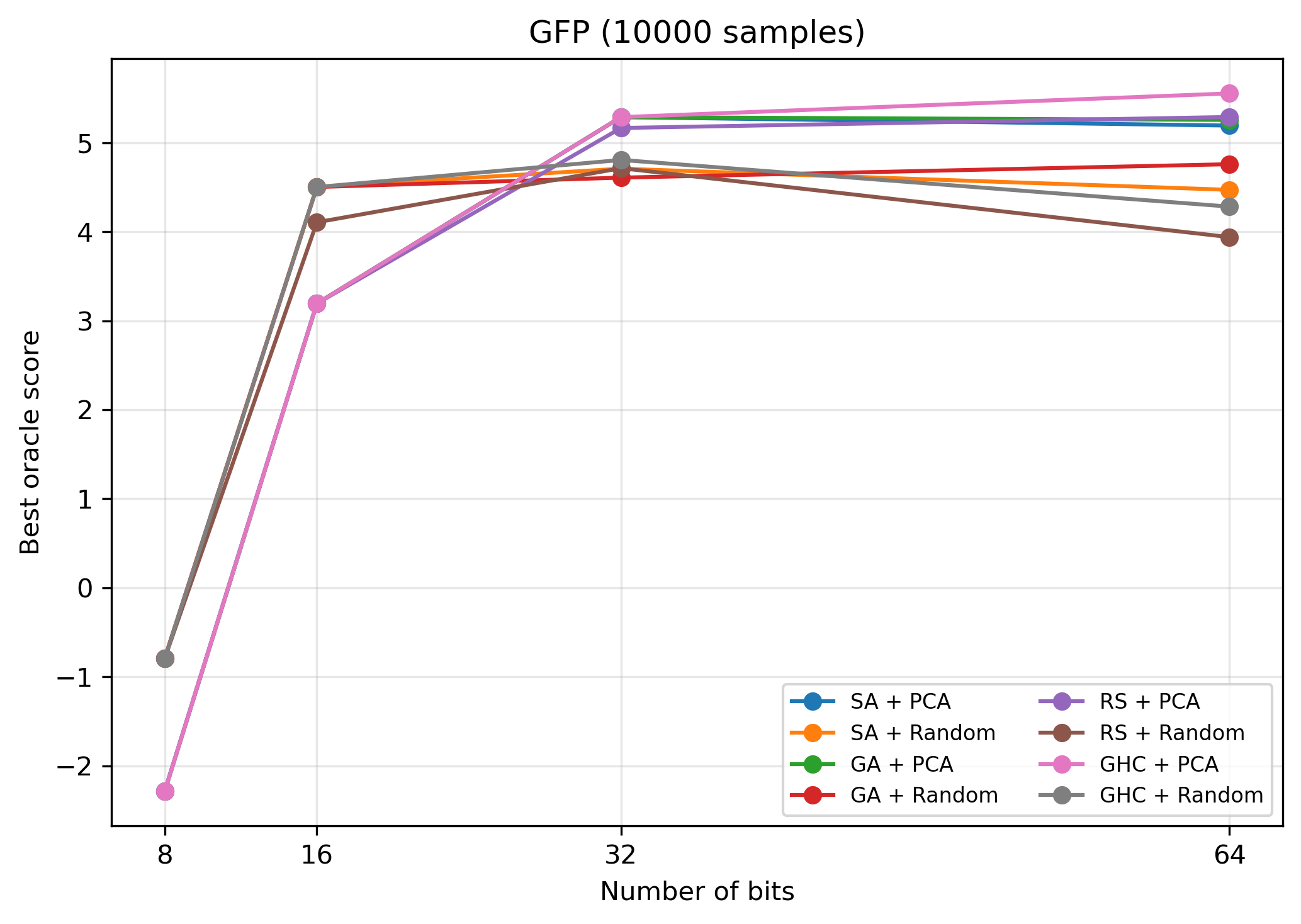}
\caption{GFP with 10000 training samples.}
\end{subfigure}
\vspace{0.5em}
\caption{End-to-end design performance on GFP across different training sizes. The x-axis shows the number of bits and the y-axis shows the best oracle score. Each line corresponds to an optimizer-representation pair, including simulated annealing (SA), genetic algorithm (GA), random search (RS), and greedy hill climbing (GHC), combined with either PCA-based or random-projection binary latent representations.}
\label{fig:appendix_gfp_methods}
\end{figure*}

\begin{figure*}[t]
\centering
\begin{subfigure}[t]{0.48\textwidth}
\centering
\includegraphics[width=\linewidth]{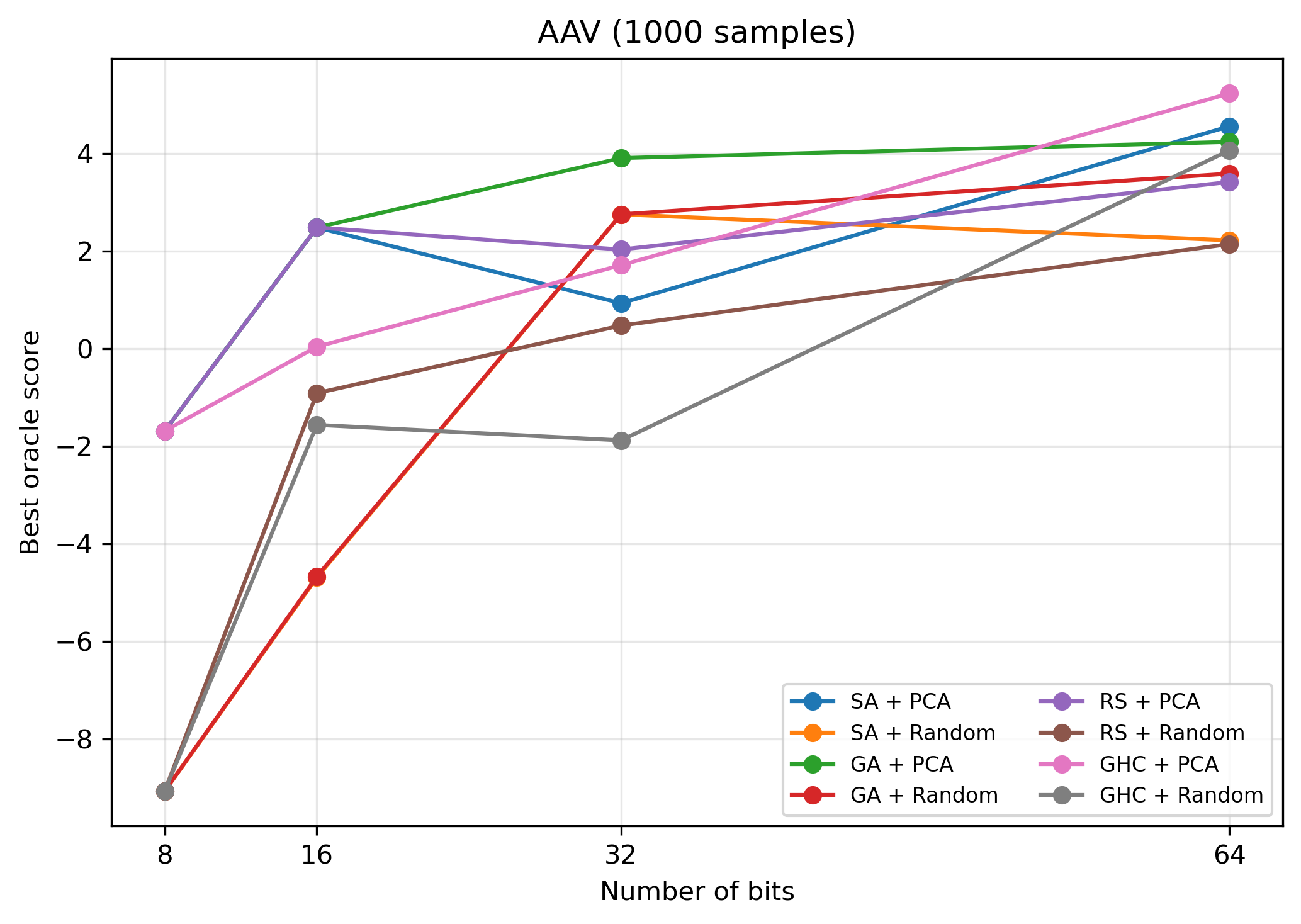}
\caption{AAV with 1000 training samples.}
\end{subfigure}
\hfill
\begin{subfigure}[t]{0.48\textwidth}
\centering
\includegraphics[width=\linewidth]{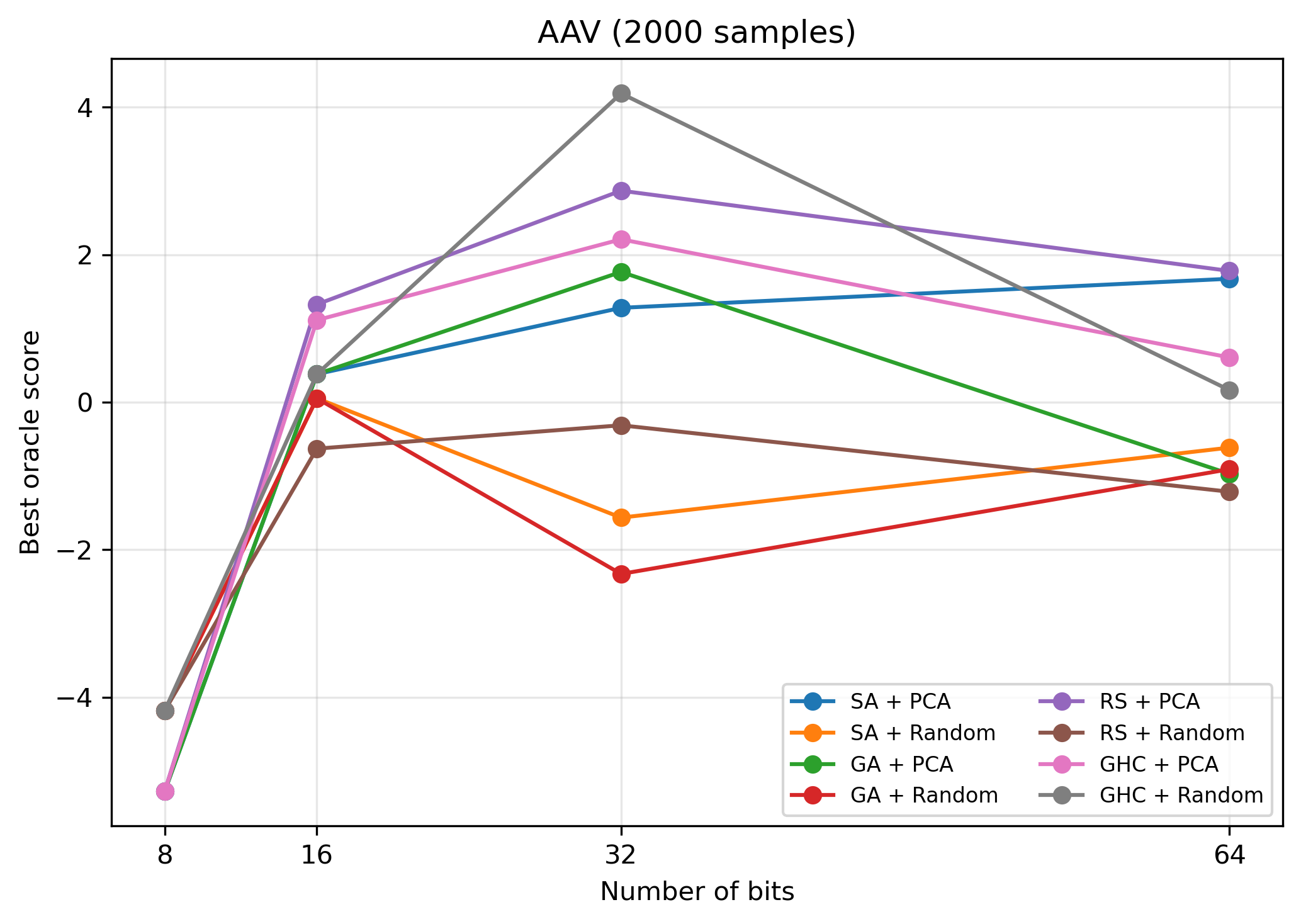}
\caption{AAV with 2000 training samples.}
\end{subfigure}
\vspace{0.5em}
\begin{subfigure}[t]{0.48\textwidth}
\centering
\includegraphics[width=\linewidth]{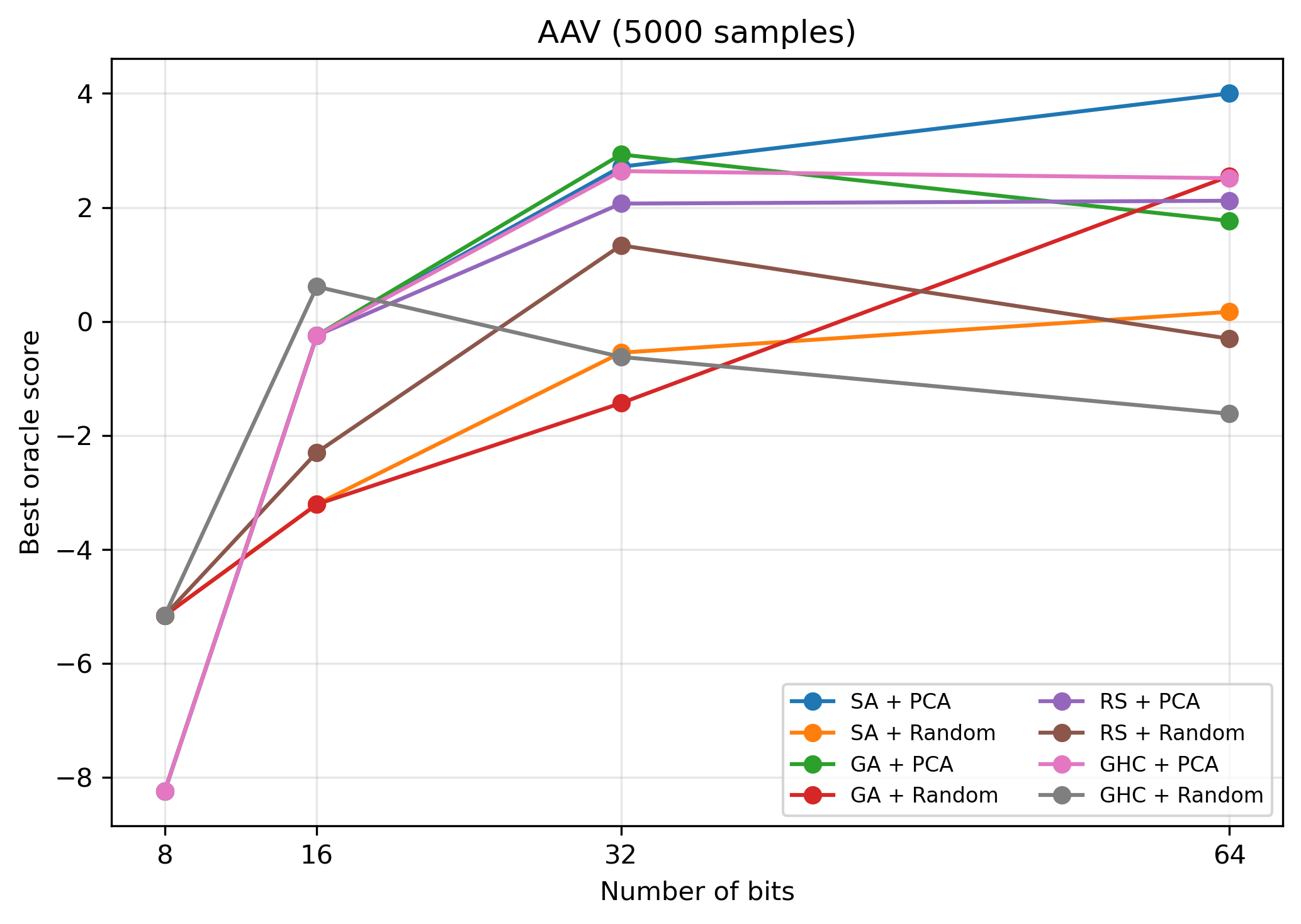}
\caption{AAV with 5000 training samples.}
\end{subfigure}
\hfill
\begin{subfigure}[t]{0.48\textwidth}
\centering
\includegraphics[width=\linewidth]{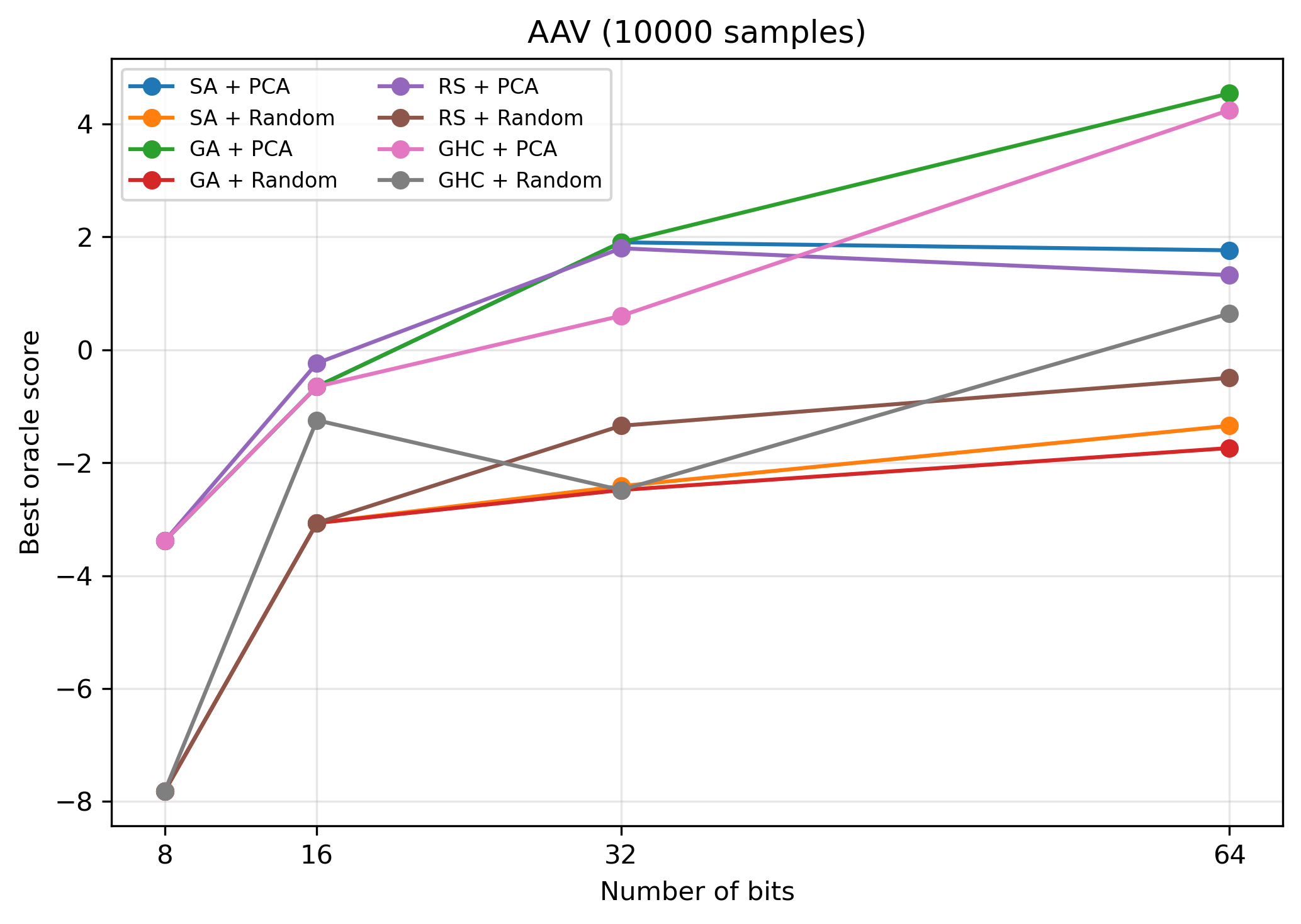}
\caption{AAV with 10000 training samples.}
\end{subfigure}
\vspace{0.5em}
\caption{End-to-end design performance on AAV across different training sizes. The x-axis shows the number of bits and the y-axis shows the best oracle score. Each line corresponds to an optimizer-representation pair, including simulated annealing (SA), genetic algorithm (GA), random search (RS), and greedy hill climbing (GHC), combined with either PCA-based or random-projection binary latent representations.}
\label{fig:appendix_aav_methods}
\end{figure*}

\begin{table*}[t]
\centering
\caption{Detailed end-to-end design results on GFP with 1000 training samples.}
\label{tab:appendix_gfp_1000}
\begin{tabular}{l l l c c}
\toprule
Bits & Representation & Best optimizer & Best score $\uparrow$ & Top-10 mean $\uparrow$ \\
\midrule
\multirow{2}{*}{8} & PCA & LBO & 2.616 & 2.524 \\
 & Random & LBO & 2.612 & 2.531 \\
\midrule
\multirow{2}{*}{16} & PCA & RS & 3.950 & 3.247 \\
 & Random & RS & 3.598 & 2.488 \\
\midrule
\multirow{2}{*}{32} & PCA & GHC & 4.032 & 3.017 \\
 & Random & GA & 3.690 & 2.525 \\
\midrule
\multirow{2}{*}{64} & PCA & RS & 4.119 & 3.903 \\
 & Random & GA & 4.170 & 3.312 \\
\bottomrule
\end{tabular}
\end{table*}

\begin{table*}[t]
\centering
\caption{Detailed end-to-end design results on GFP with 2000 training samples.}
\label{tab:appendix_gfp_2000}
\begin{tabular}{l l l c c}
\toprule
Bits & Representation & Best optimizer & Best score $\uparrow$ & Top-10 mean $\uparrow$ \\
\midrule
\multirow{2}{*}{8} & PCA & GA & 2.669 & 2.669 \\
 & Random & LBO & 2.669 & 2.267 \\
\midrule
\multirow{2}{*}{16} & PCA & GA & 3.503 & 3.212 \\
 & Random & LBO & 3.002 & 2.358 \\
\midrule
\multirow{2}{*}{32} & PCA & GA & 3.909 & 3.449 \\
 & Random & GHC & 3.448 & 2.693 \\
\midrule
\multirow{2}{*}{64} & PCA & GA & 4.027 & 3.549 \\
 & Random & LBO & 3.316 & 1.900 \\
\bottomrule
\end{tabular}
\end{table*}

\begin{table*}[t]
\centering
\caption{Detailed end-to-end design results on GFP with 5000 training samples.}
\label{tab:appendix_gfp_5000}
\begin{tabular}{l l l c c}
\toprule
Bits & Representation & Best optimizer & Best score $\uparrow$ & Top-10 mean $\uparrow$ \\
\midrule
\multirow{2}{*}{8} & PCA & GA & 2.653 & 2.653 \\
 & Random & LBO & 2.653 & 2.633 \\
\midrule
\multirow{2}{*}{16} & PCA & GHC & 3.656 & 3.117 \\
 & Random & LBO & 3.568 & 2.356 \\
\midrule
\multirow{2}{*}{32} & PCA & LBO & 4.157 & 2.918 \\
 & Random & GHC & 3.815 & 2.228 \\
\midrule
\multirow{2}{*}{64} & PCA & GHC & 4.270 & 3.972 \\
 & Random & LBO & 3.330 & 2.619 \\
\bottomrule
\end{tabular}
\end{table*}

\begin{table*}[t]
\centering
\caption{Detailed end-to-end design results on GFP with 10000 training samples.}
\label{tab:appendix_gfp_10000}
\begin{tabular}{l l l c c}
\toprule
Bits & Representation & Best optimizer & Best score $\uparrow$ & Top-10 mean $\uparrow$ \\
\midrule
\multirow{2}{*}{8} & PCA & LBO & 3.162 & -3.738 \\
 & Random & LBO & 2.785 & -4.556 \\
\midrule
\multirow{2}{*}{16} & PCA & LBO & 4.712 & 3.452 \\
 & Random & GHC & 4.502 & 3.029 \\
\midrule
\multirow{2}{*}{32} & PCA & GA & 5.287 & 4.823 \\
 & Random & GHC & 4.806 & 3.829 \\
\midrule
\multirow{2}{*}{64} & PCA & GHC & 5.554 & 5.042 \\
 & Random & GA & 4.758 & 3.003 \\
\bottomrule
\end{tabular}
\end{table*}

\begin{table*}[t]
\centering
\caption{Detailed end-to-end design results on AAV with 1000 training samples.}
\label{tab:appendix_aav_1000}
\begin{tabular}{l l l c c}
\toprule
Bits & Representation & Best optimizer & Best score $\uparrow$ & Top-10 mean $\uparrow$ \\
\midrule
\multirow{2}{*}{8} & PCA & GA & -1.687 & -4.039 \\
 & Random & LBO & -6.492 & -7.534 \\
\midrule
\multirow{2}{*}{16} & PCA & SA & 2.484 & -1.568 \\
 & Random & RS & -0.913 & -6.523 \\
\midrule
\multirow{2}{*}{32} & PCA & GA & 3.906 & -0.680 \\
 & Random & SA & 2.753 & -3.336 \\
\midrule
\multirow{2}{*}{64} & PCA & GHC & 5.232 & 2.962 \\
 & Random & GHC & 4.066 & -1.557 \\
\bottomrule
\end{tabular}
\end{table*}

\begin{table*}[t]
\centering
\caption{Detailed end-to-end design results on AAV with 2000 training samples.}
\label{tab:appendix_aav_2000}
\begin{tabular}{l l l c c}
\toprule
Bits & Representation & Best optimizer & Best score $\uparrow$ & Top-10 mean $\uparrow$ \\
\midrule
\multirow{2}{*}{8} & PCA & LBO & -4.213 & -6.054 \\
 & Random & LBO & -3.352 & -5.891 \\
\midrule
\multirow{2}{*}{16} & PCA & RS & 1.325 & -0.440 \\
 & Random & LBO & 1.769 & -3.162 \\
\midrule
\multirow{2}{*}{32} & PCA & RS & 2.866 & 0.349 \\
 & Random & GHC & 4.184 & -2.674 \\
\midrule
\multirow{2}{*}{64} & PCA & RS & 1.779 & -0.353 \\
 & Random & GHC & 0.164 & -1.511 \\
\bottomrule
\end{tabular}
\end{table*}

\begin{table*}[t]
\centering
\caption{Detailed end-to-end design results on AAV with 5000 training samples.}
\label{tab:appendix_aav_5000}
\begin{tabular}{l l l c c}
\toprule
Bits & Representation & Best optimizer & Best score $\uparrow$ & Top-10 mean $\uparrow$ \\
\midrule
\multirow{2}{*}{8} & PCA & LBO & -1.727 & -6.113 \\
 & Random & LBO & -2.616 & -5.168 \\
\midrule
\multirow{2}{*}{16} & PCA & RS & -0.245 & -2.136 \\
 & Random & GHC & 0.610 & -2.630 \\
\midrule
\multirow{2}{*}{32} & PCA & GA & 2.929 & 0.666 \\
 & Random & RS & 1.336 & -5.012 \\
\midrule
\multirow{2}{*}{64} & PCA & SA & 3.999 & 2.249 \\
 & Random & GA & 2.548 & -1.150 \\
\bottomrule
\end{tabular}
\end{table*}

\begin{table*}[t]
\centering
\caption{Detailed end-to-end design results on AAV with 10000 training samples.}
\label{tab:appendix_aav_10000}
\begin{tabular}{l l l c c}
\toprule
Bits & Representation & Best optimizer & Best score $\uparrow$ & Top-10 mean $\uparrow$ \\
\midrule
\multirow{2}{*}{8} & PCA & LBO & -1.903 & -3.723 \\
 & Random & LBO & -5.172 & -7.184 \\
\midrule
\multirow{2}{*}{16} & PCA & RS & -0.237 & -2.834 \\
 & Random & LBO & 1.197 & -4.205 \\
\midrule
\multirow{2}{*}{32} & PCA & SA & 1.904 & 0.727 \\
 & Random & LBO & -0.154 & -5.185 \\
\midrule
\multirow{2}{*}{64} & PCA & GA & 4.541 & 2.091 \\
 & Random & GHC & 0.644 & -2.174 \\
\bottomrule
\end{tabular}
\end{table*}

\clearpage
\bibliographystyle{unsrtnat}
\bibliography{main}

\end{document}